\documentclass{article}

\usepackage[accepted]{icml2026}  

\makeatletter
\def\ICML@appearing{Preprint. Under review.}
\makeatother

\usepackage{microtype}
\usepackage{graphicx}
\usepackage{subcaption}
\usepackage{booktabs}
\usepackage{titletoc}  

\usepackage{mathtools}
\usepackage{amsmath,amsfonts,amssymb}
\usepackage{amsthm}
\usepackage{xcolor}

\usepackage[colorlinks=true,linkcolor=blue,citecolor=blue,urlcolor=blue]{hyperref}

\usepackage[capitalize,noabbrev]{cleveref}

\renewcommand{\algorithmicrequire}{\textbf{Input:}}
\renewcommand{\algorithmicensure}{\textbf{Output:}}

\usepackage[textsize=tiny]{todonotes}

\newtheorem{theorem}{Theorem}
\newtheorem{corollary}{Corollary}
\newtheorem{prop}{Proposition}
\newtheorem{remark}{Remark}
\newtheorem{lemma}{Lemma}

\newtheorem{definition}{Definition}

\newcommand{\bb}{\mathbf{b}}
\newcommand{\bc}{\mathbf{c}}
\newcommand{\bd}{\mathbf{d}}
\newcommand{\be}{\mathbf{e}}
\newcommand{\bbf}{\mathbf{f}}
\newcommand{\bg}{\mathbf{g}}
\newcommand{\bh}{\mathbf{h}}

\newcommand{\bp}{\mathbf{p}}

\newcommand{\bs}{\mathbf{s}}

\newcommand{\bu}{\mathbf{u}}
\newcommand{\bv}{\mathbf{v}}
\newcommand{\bw}{\mathbf{w}}
\newcommand{\bx}{\mathbf{x}}
\newcommand{\by}{\mathbf{y}}

\newcommand{\bA}{\mathbf{A}}

\newcommand{\bD}{\mathbf{D}}

\newcommand{\bI}{\mathbf{I}}

\newcommand{\bQ}{\mathbf{Q}}

\newcommand{\bSigma}{\boldsymbol{\Sigma}}
\newcommand{\bxi}{\boldsymbol{\xi}}

\icmltitlerunning{Minimizing Collateral Damage in Activation Steering}

\begin{document}

\twocolumn[
\icmltitle{Minimizing Collateral Damage in Activation Steering}

\begin{icmlauthorlist}
\icmlauthor{Tam Nguyen}{rice}
\icmlauthor{Tu Anh Nguyen}{ricemath}
\icmlauthor{Sina Alemohammad}{ut}
\icmlauthor{Richard G. Baraniuk}{rice}
\end{icmlauthorlist}

\icmlaffiliation{rice}{Department of Electrical \& Computer Engineering, Rice University, Houston, USA}
\icmlaffiliation{ricemath}{Department of Computational and Applied Mathematics, Rice University, Houston, USA}
\icmlaffiliation{ut}{Department of Electrical \& Computer Engineering, The University of Texas at Austin, Austin, USA}

\icmlcorrespondingauthor{Tam Nguyen}{mn72@rice.edu}

\icmlkeywords{Activation Steering, Large Language Models, Representation Geometry}

\vskip 0.3in
]

\printAffiliationsAndNotice{}

\begin{abstract}
Activation steering is a method for controlling Large Language Model (LLM) behavior by intervening in its internal representations to increase the alignment with a specific target feature direction. However, standard interventions, such as vector addition, often cause ``collateral damage", defined as unintended changes in the alignment of activations along other non-target feature directions. This damage occurs because standard methods implicitly assume the isotropy of non-target features. In this work, we provide a mathematical formalization of collateral damage and introduce a principled framework that models steering as a constrained optimization problem. Our method finds a new activation that minimizes the expected squared collateral change weighted by the empirical second-moment matrix of activations. This weighting encodes the nonuniform cost of the perturbation in different feature directions, in contrast to isotropic approaches that penalize changes uniformly in all feature directions. By accounting for the empirical second-moment of activations, our approach achieves more precise control while reducing the degradation of model performance on unrelated tasks.
\end{abstract}

\section{Introduction} 
\label{sec:introduction}

Controlling LLM behavior has traditionally relied on retraining or fine-tuning~\cite{zhang2026survey, ziegler2019fine, Dathathri2020Plug}, processes that become increasingly expensive at scale. {\em Activation steering} provides a more efficient alternative, enabling behavioral control at inference time without modifying the model weights~\cite{zou2023repeng, li2024ITI, panickssery2023steering_llama2, liu2023context}. By directly intervening in internal representations, this ``plug-and-play'' approach nudges activation vectors to enhance desirable traits like factuality~\cite{li2024ITI} or to suppress negative tendencies such as sycophancy, toxicity, and hallucination~\cite{chen2025persona, zhang-etal-2024-truthx}. Recent work has highlighted the broad applicability of activation steering in managing diverse semantic traits—ranging from appropriateness to humor and creativity~\cite{wang2025persona}—demonstrating that alignment can be achieved without costly parameter updates.

A common method for activation steering is {\em ActAdd}~\cite{turner2023actadd}, which adds a scaled steering vector directly to a model's activations. While simple to implement, this class of methods can be scale-sensitive, since the magnitude of the steering vector can significantly alter the model's internal distribution and lead to performance degradation. {\em Norm-Preserving Steering} methods address this by modifying only the activation direction while keeping the original magnitude intact to avoid disrupting the model's internal scaling~\cite{pham-nguyen-2024-householder}. Examples include Slerp (for Spherical linear interpolation)~\cite{Shoemake1985AnimatingRW, white2016sampling, jang2024spherical}, which rotates the activation vector along the great circle towards the target direction until the desired cosine similarity is met. 
Similarly, {\em Angular Steering}~\cite{vu2025angular} projects the activation onto a fixed 2D plane and rotates it to a target angle, changing only the in-plane component while preserving the global norm.

Existing steering methods face a {\em trade-off between steering effectiveness and downstream performance}. 
For example, the experiment reported in Figure~\ref{fig:model-comparison-pareto}(a) shows that achieving a 30-40\% steering success rate with ActAdd causes a 10–20\% drop in performance in Gemma-2-9B-IT~\cite{team2024gemma}. 
Meanwhile, Angular Steering maintains downstream performance but fails to steer effectively, reaching a success rate of less than 10\% in the same model.

The steering/performance trade-off arises because steering the model to exhibit one feature can alter the alignment of the activations with other features. 
Existing steering methods 
implicitly treat an activation shift in any direction as equally costly (isotropic). 
In reality, the geometry of LLM representations is {\em anisotropic}. 
The features are not uniform; some directions are more coupled to others. 
And when the features are entangled with the steering direction, ignoring this geometry can unintentionally disturb them.



In addition while the general trade-off between steering success and performance has been observed~\cite{zhang-etal-2025-safety, zhao-etal-2025-adasteer, yousefpour-etal-2025-representation}, existing work has focused on heuristic monitoring of downstream benchmarks that address the symptoms of performance loss rather than the geometric root causes.

In this paper, we take initial steps toward understanding and mitigating the tradeoff between activation steering success and model performance. We formalize the concept of \textit{collateral damage} as the expected unintended alignment shift toward non-target directions and empirically show that \textit{increased collateral damage leads to performance degradation} (see Fig.~\ref{fig:damage_correlation}). Motivated by a geometric view of representation space, we propose a general framework for minimizing collateral damage during steering. Geometrically, given a fixed alignment budget—defined as the cosine similarity between the activation and the target direction— we find the safest steered activation that lies on a particular “latitude” of the hypersphere of normalized activations.
\begin{figure}[t]
    \centering
    \includegraphics[width=\linewidth]{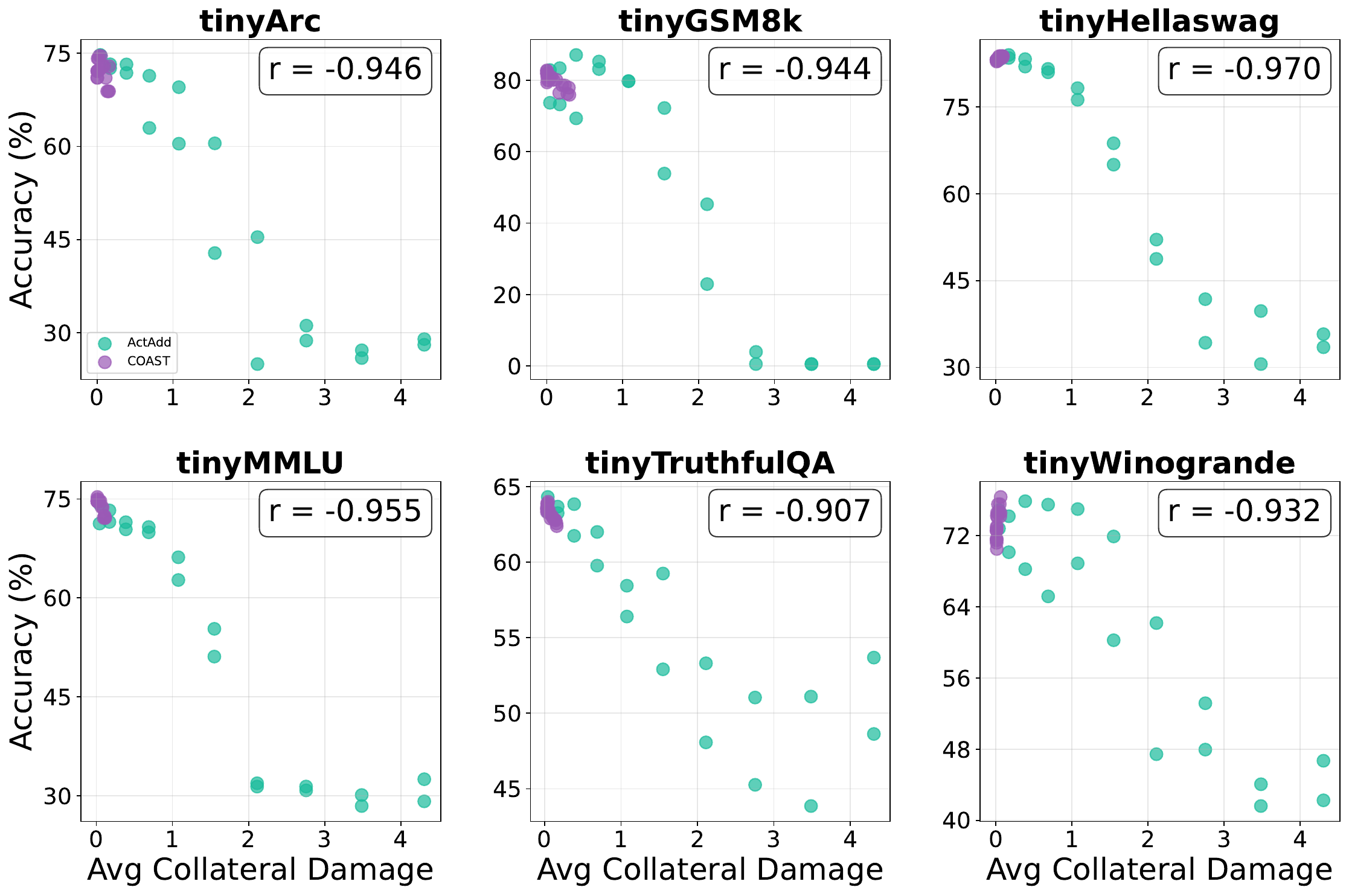}
    
    \caption{\small Negative correlation between {\em collateral damage} and {\em accuracy} of steering models on Qwen2.5-14B-Instruct. Across six benchmarks, we observe a strong negative Pearson correlation ($r < -0.9$) between the average collateral damage and the model's accuracy. This validates that our collateral damage metric is a reliable proxy for performance degradation.
    }
    \label{fig:damage_correlation}
\end{figure}
Within this framework, existing activation steering approaches correspond to a special case that we term \textit{isotropic collateral damage}, where the degradation incurred by steering is assumed to be invariant to the direction of the activation shift and depends only on its magnitude. However, LLM representations are inherently anisotropic; consequently, true collateral damage is direction-dependent and highly sensitive to the geometry of the representation space. This observation motivates explicitly accounting for anisotropy when designing steering methods, which is central to our approach.



We introduce {\em COllateral-damage Minimizing Activation STeering} (COAST), which applies algorithms rooted in differential geometry to traverse the solution space and find the optimal steering vector. We prove that COAST converges to the global minimum of the collateral damage objective in the continuous limit. Furthermore, it strictly generalizes SLERP, the norm-preserving version of ActAdd. 
COAST reduces to SLERP under isotropic conditions but adapts to protect performance when anisotropy is present. Like other additive steering methods, COAST is plug-and-play, adding only a small overhead to the inference cost (e.g., 0.9\% to 4\% in our experiments).


While baseline methods like ActAdd and Angular Steering exhibit a trade-off between steering effectiveness and model performance—especially in reasoning-intensive tasks included in tinyBenchmarks~\cite{polo2024tinybenchmarks}—our experiments demonstrate that COAST effectively mitigates this compromise. Indeed, {\em COAST achieves high-quality steering without degrading performance}, yielding up to a 30\% improvement in Attack Success Rate (ASR) 
over Angular Steering and improving up to 20\% in accuracy compared to ActAdd at the same ASR. In addition, COAST outperforms SLERP in steering success while preserving the model's performance.


By explicitly minimizing collateral damage under a fixed steering budget, COAST makes it possible to push models {\em where you want them to go without breaking what already works}. This shifts the conversation from ``can we steer?'' to ``can we steer responsibly,'' opening the door to deployment settings where reliability, safety, and downstream performance must all be simultaneously protected.

\section{Related Work} 
\label{sec:related_works}

\textbf{Activation steering.} A growing line of research controls LLM behavior by intervening directly on intermediate activations at inference time~\cite{zou2023repeng, li2024ITI, panickssery2023steering_llama2, liu2023context}.~\cite{turner2023actadd} formalizes this paradigm as {\em activation engineering} and introduces Activation Addition (ActAdd). This method involves computing a steering direction—typically derived from contrastive prompt pairs—and then injecting a scaled version of this vector into the residual stream to shift model attributes. Variants of this approach have been employed to steer behaviors such as truthfulness~\cite{li2024ITI, zhang-etal-2024-truthx} and to modulate persona-style traits like sycophancy, toxicity, and hallucination propensity~\cite{chen2025persona}. Similarly,~\cite{wang2025persona} applies inference-time steering to control high-level qualities like appropriateness, humor, and creativity. However, a critical drawback of these methods is {\em scale sensitivity}: steering quality relies heavily on the intervention coefficient and the effective steering vector norm. This requires careful hyperparameter tuning, and overly strong intervention can degrade fluency or introduce unintended side effects.


\textbf{Norm-constrained activation steering.}~\cite{arditi2024refusal} introduces directional ablation, which removes the steering vector from the residual stream. By normalizing the steering vector before removal, this method partially mitigates scale sensitivity, though it lacks flexibility as it can only erase the direction, not amplify or attenuate it. Conversely,~\cite{vu2025angular} introduce Angular Steering to promote norm-preservation and flexibility. This method constructs a 2D plane using the steering vector and the first principal component of the steering vectors across all layers, performing rotation on this plane while preserving the activation components orthogonal to it. However, because the rotation is restricted to a fixed 2D plane, Angular Steering struggles to satisfy specific alignment budgets, can leads to struggle in steering success. Furthermore, Angular Steering relies on the assumption that features are nearly orthogonal. In Section~\ref{sec:coast_to_slerp}, we demonstrate that, when this assumption holds, Spherical Linear Interpolation (Slerp) is the optimal solution for minimizing collateral damage. Generalizing beyond this constraint, COAST achieves the global optimum solution given an alignment budget.

\textbf{Robust activation steering.} Prior work has highlighted a trade-off between safety and utility, as activation steering can cause over-refusal and degrade model usefulness~\cite{zhang-etal-2025-safety, zhao-etal-2025-adasteer, yousefpour-etal-2025-representation}. Mitigation strategies include conditional activation steering~\cite{lee2025programming}, sharpening target direction~\cite{wang2025surgical, shen2025jailbreak}, or nullspace-constrained approaches~\cite{Wang2024SteeringAF, sheng2025alphasteer} that dynamically induce refusal only for malicious prompts. Our work generalizes the goal of activation steering beyond safety to the broader challenge of altering any attribute without compromising overall performance. 
Our approach is orthogonal to these robust methods focusing explicitly on formulating and minimizing collateral damage during steering.

\section{Collateral-damage Minimizing Activation Steering (COAST)}

In this paper we study normalized activation steering: we edit a representation while preserving its magnitude, and we choose the intervention to minimize collateral damage subject to a directional alignment budget. The steering direction serves as a simple control knob, i.e., an explicit constraint on how far the intervention may deviate from the intended steering direction. Concretely, we enforce this budget by lower-bounding the cosine similarity between the intervention vector and the steering direction. This formulation naturally treats features as directions in activation space, aligning with the linear feature hypothesis~\cite{park2024linearhypothesis, Nanda2023EmergentLR, huben2024sparse, jiang2024origins} and enabling principled, geometry-aware steering.
\subsection{Steering with an alignment budget}
\label{subsec:alignment_budget_steering}
Let $\mathbf{h} \in \mathbb{R}^p$ denote the activation, and let $\mathbf{d} \in \mathbb{R}^p$ denote a target feature direction. We seek a new activation $\mathbf{x} \in \mathbb{R}^p$ that satisfies a prescribed alignment budget $\mathbf{d}^\top \mathbf{x} = \alpha$,
where $\alpha \in [-1, 1]$ is the desired post-steering cosine similarity with $\mathbf{d}$. We also enforce norm preservation by working on the unit sphere and assuming that $\|\mathbf{h}\|=\|\mathbf{d}\|=1$ and $\|\mathbf{x}\|=1$. This normalization is without loss of generality, since we can rescale the steered activation by its original activation norm after the intervention.

We begin by formulating an objective that explicitly quantifies the impact of steering on non-target features.
To this end, we model non-target features as belong to the family $\mathcal{F} = \{ \mathbf{f} \in \mathbb{R}^p \mid \mathbf{f} \neq \mathbf{d}, \text{ } \mathbf{f} \text{ is semantic} \}$, representing all meaningful directions excluding the target.
We measure the collateral change induced by steering as the change in alignment along $\mathbf{f}$, given by $\mathbf{f}^\top(\mathbf{x}-\mathbf{h})$. We define the {\em collateral damage} as the expected squared collateral change over the population of non-target feature directions
\begin{equation}
\mathbb{E}\big[(\mathbf{f}^\top(\mathbf{x}-\mathbf{h}))^2\big]
\;=\; (\mathbf{x}-\mathbf{h})^\top \mathbf{\Sigma}_{\mathbf{f}} (\mathbf{x}-\mathbf{h}),
\label{eq:cd}
\end{equation}
where the expectation is summarized by the second moment matrix $\mathbf{\Sigma}_{\mathbf{f}} := \mathbb{E}[\mathbf{f}\mathbf{f}^\top]$ that we dub the {\em collateral weighting matrix}.

Combining the fixed-norm constraint and the alignment budget defines the feasible set
\begin{equation}
\label{eq:constraint}
    \mathcal{M} \;:=\; \{\mathbf{x} \in \mathbb{R}^p : \|\mathbf{x}\|=1,\; \mathbf{d}^\top \mathbf{x} = \alpha\}.
\end{equation}
Our steering problem is therefore the constrained optimization
\begin{equation}
\min_{\mathbf{x} \in \mathcal{M}} (\mathbf{x}-\mathbf{h})^\top \mathbf{\Sigma}_{\mathbf{f}} (\mathbf{x}-\mathbf{h}).
\label{eq:main_problem}
\end{equation}
The solution to \eqref{eq:main_problem} achieves the required steering effect with minimal collateral damage.
\begin{figure}[!t]
\centering
\includegraphics[width=0.28\textwidth]{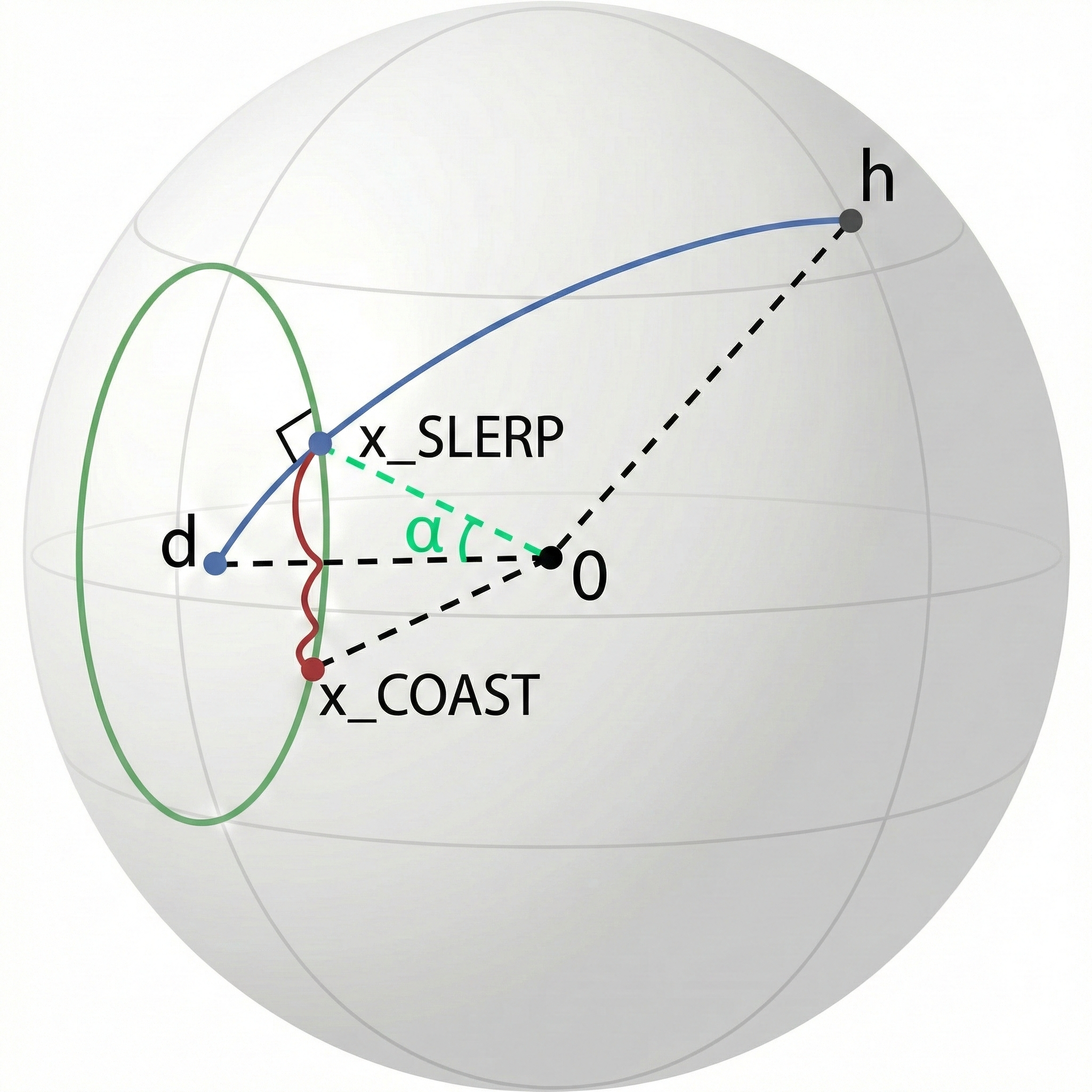}
\vspace{0.1in}
\caption{\small Unlike the rigid Slerp path (blue), our optimized COAST trajectory (red) traverses the manifold of valid steering vectors (green) in order to minimize the collateral damage.
}
\label{fig:COAST} 
\end{figure}
\subsection{Choosing the collateral weighting matrix $\mathbf{\Sigma}$}
\label{subsec:choose_sigma}
The collateral matrix $\mathbf{\Sigma}_{\mathbf{f}} = \mathbb{E}[\mathbf{f}\mathbf{f}^\top]$ specifies which feature directions are costly to perturb. Specifically, $\mathbf{\Sigma}_{\mathbf{f}}$ treats feature directions as uniformly important—meaning a change along feature $f_i$ is just as costly as a change along feature $f_j$.

However, it is crucial to note that treating features with equal importance does not imply that the optimization landscape is isotropic (i.e., that steering costs are the same in all ambient directions). For example, if features are clustered—as is common in LLMs—steering toward a cluster disturbs many features simultaneously, creating high-cost directions. To clarify this distinction, we introduce the following.

\begin{remark}[Two notions of uniformity]
\label{remark:uniformity}
We distinguish between two distinct types of uniformity that govern the behavior of activation steering.

{\em Uniform importance} relates to our prescriptive intent; it assumes that every non-target feature direction should be penalized equally. This leads to an unweighted objective where the collateral matrix $\Sigma_{f}$ considers all directions to be of equal functional concern.

{\em Uniform geometric distribution} relates to the descriptive property of the latent space; it assumes that feature directions themselves are distributed uniformly (isotropically) on the unit sphere. 
\end{remark}
In practice, applications often require feature-preservation prioritization that is task-dependent or user-specific. Therefore, we generalize the notion of $\mathbf{\Sigma}$ to an abstract collateral matrix. Different choices of $\mathbf{\Sigma}$ encode different notions of feature preservation and can reflect task-dependent or user-specific requirements.

We generalize the unweighted expectation by introducing a weighted objective that incorporates a feature importance function $w(\mathbf{f}) \geq 0$.
The resulting collateral objective is defined as
$\mathbb{E}[w(\mathbf{f})(\mathbf{f}^\top (\mathbf{x} - \mathbf{h}))^2] = (\mathbf{x} - \mathbf{h})^\top \mathbf{\Sigma}_w (\mathbf{x} - \mathbf{h})$, 
where the collateral weighting matrix is
$\mathbf{\Sigma}_w := \mathbb{E}[w(\mathbf{f}) \mathbf{f}\mathbf{f}^\top]$. 
Unlike standard isotropic methods that assign equal weight to all perturbations, this formulation enables explicit collateral preservation based on specific functional preferences. By adjusting $w(\mathbf{f})$, practitioners can prioritize the protection of certain semantic directions over others, moving beyond the limiting assumption of uniform significance across the feature landscape

The primary optimization challenge is that the latent feature family $\mathcal{F}$ is not enumerable, making direct computation of $\boldsymbol{\Sigma}_w$ infeasible. A natural approximation is to represent features using a learned dictionary $\mathbf{F} = [\mathbf{f}_1, \dots, \mathbf{f}_m]$ via Sparse Autoencoders (SAEs)~\cite{ng2011sparse, gao2025scaling, lieberum-etal-2024-gemma}. To quantify feature importance, one approach is utilize a distribution of reference activations, $\mathcal{H}_{\text{ref}}$, collected by passing a general text corpus through the model. Since SAEs decompose any activation $\mathbf{h} \in \mathcal{H}_{\text{ref}}$ as $\mathbf{h} \approx \sum_i c_i(\mathbf{h}) \mathbf{f}_i$, we can define the importance weight $w_i = \mathbb{E}_{\mathbf{h} \sim \mathcal{H}_{\text{ref}}}[c_i(\mathbf{h})^2]$. This formulation ensures that features used frequently or strongly across the general data distribution are penalized more heavily. This yields the dictionary-based collateral weighting matrix $\boldsymbol{\Sigma}_{w} = \sum_i w_i \mathbf{f}_i \mathbf{f}_i^\top$. While interpretable, this construction introduces substantial overhead, as it requires training high-quality SAEs for every model and layer of interest.


More fundamentally, the dictionary-based approach is limited because it ignores feature interactions. By summing over individual-feature moments, it implicitly assumes that features are statistically independent, thereby treating interaction terms as zero. In reality, however, features can co-activate~\cite{deng2025sparse}. To respect the feature geometry without relying on explicit decomposition, we propose using the empirical second moment of the activations, $\boldsymbol{\Sigma}_{\mathbf{h_{\text{ref}}}} = \mathbb{E}_{\mathbf{h} \sim \mathcal{H}_{\text{ref}}}[\mathbf{h_{\text{ref}}}\mathbf{h_{\text{ref}}}^\top]$, as the collateral weighting matrix. Unlike the dictionary-based approximation, this metric rigorously aggregates both individual feature moments and their co-activation terms:
\begin{equation*}
\boldsymbol{\Sigma}_{\mathbf{h_{\text{ref}}}} = \mathbb{E}[\mathbf{h_{\text{ref}}}\mathbf{h_{\text{ref}}}^T] \approx \mathbb{E} \left[ \sum_i c_i^2 \mathbf{f}_i \mathbf{f}_i^T + \sum_{i \neq j} c_i c_j \mathbf{f}_i \mathbf{f}_j^T \right],
\end{equation*}
where $c_i = c_i(\bh_{\text{ref}})$, and $c_j = c_j(\bh_{\text{ref}})$.
By adopting $\mathbf{\Sigma}_{\mathbf{h_{\text{ref}}}}$, COAST can account for anisotropic feature correlations and co-activations without requiring explicit feature decomposition or external dictionaries.
In practice, the empirical second moment $\boldsymbol{\Sigma}_{\mathbf{h_{\text{ref}}}}$ can be computed using $N$ tokens sampled from a general-purpose corpus or a domain-specific dataset relevant to the target task.


Henceforth, we will use $\mathbf{\Sigma}$ to denote a general collateral weighting matrix; we will specify $\mathbf{\Sigma}_{\mathbf{f}}$, $\mathbf{\Sigma}_{w}$, or $\boldsymbol{\Sigma}_{\mathbf{h_{\text{ref}}}}$ only when the distinction is necessary.

\subsection{COAST optimization on the budget sphere $\mathcal{M}$}
\label{subsec:COAST-opt}

We now develop an algorithm to efficiently solve the constrained optimization problem $\min_{\bx \in \mathcal{M}} J(\bx)$, where $J(\bx) = (\bx-\bh)^\top \bSigma(\bx-\bh)$ is the collateral damage and $\mathcal{M}$ is the feasible set from \eqref{eq:constraint}.

\textbf{Geometry of the feasible set.}
The budget constraint $\bd^\top \bx=\alpha$ fixes a {\em target latitude} on the unit sphere.
Consequently, the feasible set $\mathcal{M}$ defined in \eqref{eq:constraint} forms a $(p-2)$-sphere with center $\alpha\bd$ and radius $r=\sqrt{1-\alpha^2}$. 
This geometric perspective naturally motivates the use of Riemannian optimization.
\subsubsection{Tangent space and Riemannian gradient on $\mathcal{M}$}
To adopt techniques from Riemannian optimization, we first need to characterize the tangent space $T_{\bx}\mathcal{M}$ for any $\bx \in \mathcal{M}$.
A vector $\bxi \in \mathbb{R}^P$ is tangent to the manifold $\mathcal{M}$ at $\bx$ if an infinitesimal step from $\bx$
along a curve on $\mathcal{M}$ with initial velocity $\bxi$ does not violate the optimization constraints. 
This implies $\bxi$ must be orthogonal to the gradients of the constraint functions
$c_1(\bx) \coloneqq \tfrac12(\|\bx\|^2-1)$ and $c_2(\bx) \coloneqq \bd^\top\bx-\alpha$.
Since $\nabla c_1(\bx)=\bx$ and $\nabla c_2(\bx)=\bd$, we have
\begin{equation}
    T_{\bx}\mathcal{M} = \{\bxi \in \mathbb{R}^P : \bx^\top \bxi = 0,\ \bd^\top \bxi = 0\}.
\end{equation}
\textbf{Tangent-space projector.}
Let $\Pi_{\bx}$ denote the orthogonal projector onto $T_{\bx}\mathcal{M}$.
Since the normal space of $T_{\bx}\mathcal{M}$ is the span of $\{\bx,\bd\}$, 
the unique symmetric projector with range $T_{\bx}\mathcal{M}$ is given by
\begin{equation}
\label{eq:proj}
\nonumber
\Pi_{\bx} = \bI - \bx\bx^\top - \frac{(\bd-\alpha\bx)(\bd-\alpha\bx)^\top}{1-\alpha^2}.
\end{equation}
Here, $\bd-\alpha\bx$ is the component of $\bd$ orthogonal to $\bx$ and
$\|\bd-\alpha\bx\|^2 = 1-\alpha^2 = r^2$.

\textbf{Riemannian gradient.} The Euclidean gradient of $J(\bx)$ is
$\nabla J(\bx) = 2\bSigma(\bx-\bh)$.
Projecting $\nabla J(\bx)$ onto the tangent space yields the Riemannian gradient, we derive the steepest-ascent direction on $\mathcal{M}$ as
\begin{equation}
\label{eq:riemgrad}
\mathrm{grad} J(\bx) = \Pi_{\bx}\nabla J(\bx) = \Pi_{\bx}\big(2\bSigma(\bx-\bh)\big).
\end{equation}
Based on \eqref{eq:riemgrad}, the underlying idea of the COAST optimization algorithm in Alg.~\ref{alg:geodesic_descent} is to minimize $J$ by descending along $-\mathrm{grad},J(\bx)$ while remaining on the feasible set $\mathcal{M}$.

\subsubsection{Geodesic update via the exponential map}
Given $\bx \in \mathcal{M}$, we take a tangent step $\bv = -\eta\,\mathrm{grad} J(\bx) \in T_{\bx}\mathcal{M}$ and update
by following the manifold geodesic defined by the exponential map $\exp_{\bx}(\bv)$.
Because $\mathcal{M}$ is a sphere of radius $r$ centered at $\alpha\bd$, this map has a closed form as expressed in Lemma~\ref{lem:exp-map}.

\begin{lemma}[Exponential map on $\mathcal{M}$]
\label{lem:exp-map}
Let $\bx \in \mathcal{M}$ and $\bv \in T_{\bx}\mathcal{M}$. The geodesic step from $\bx$ in direction $\bv$ is
\begin{equation}
\label{eq:exp}
\exp_{\bx}(\bv) \;=\; \alpha\bd + (\bx-\alpha\bd)\cos(\tau) \;+\; r\,\frac{\bv}{\|\bv\|}\sin(\tau),
\end{equation}
where $\tau = \|\bv\|/r$ and $r=\sqrt{1-\alpha^2}$.
\end{lemma}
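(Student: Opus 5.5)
We need to show that the curve $\gamma(t)=\exp_{\bx}(t\bv)$ given by \eqref{eq:exp} is the geodesic of $\mathcal{M}$ (with the metric induced from $\mathbb{R}^p$) that starts at $\bx$ with velocity $\bv$. The plan is to reduce to the standard formula for great circles on a round sphere via an affine change of coordinates. Concretely, we parametrize $\mathcal{M}$ as a translated and scaled unit sphere inside a linear subspace, write down its great circles, and check the initial conditions.

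\textbf{Step 1 (affine identification).} Let $W=\{\by:\bd^\top\by=0\}$, which is a $(p-1)$-dimensional subspace. For $\bx\in\mathcal{M}$, set $\by=\bx-\alpha\bd$. Then $\bd^\top\by=0$, and $\|\by\|^2=\|\bx\|^2-2\alpha\,\bd^\top\bx+\alpha^2=1-\alpha^2=r^2$. So the map $\bx\mapsto(\bx-\alpha\bd)/r$ is a bijection from $\mathcal{M}$ onto the unit sphere $S\subset W$. This map is a translation composed with a dilation, so it sends geodesics of $\mathcal{M}$ to geodesics of $S$. Only the speed rescales by $1/r$. Its differential is $\bv\mapsto\bv/r$, and it sends $T_{\bx}\mathcal{M}$ onto $T_{\bu}S$ with $\bu=(\bx-\alpha\bd)/r$. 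This is consistent with the tangent space characterization: $\bd^\top\bv=0$ and $\bx^\top\bv=0$ together give $\bu^\top\bv=0$.

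\textbf{Step 2 (great circle on $S$).} On the unit sphere $S$, the geodesic from $\bu$ with initial velocity $\bw=\bv/r\perp\bu$ is
\[
\bu\cos(t\|\bw\|)+\frac{\bw}{\|\bw\|}\sin(t\|\bw\|).
\]
If we prefer to verify this rather than cite it, we check three things. The curve stays on $S$, because $\bu\perp\bw$ and both $\bu$ and $\bw/\|\bw\|$ are unit vectors. Its acceleration equals $-\|\bw\|^2$ times the position, which is normal to $S$, so the tangential acceleration vanishes. Finally, the initial position is $\bu$ and the initial velocity is $\bw$.

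\textbf{Step 3 (map back and evaluate at $t=1$).} Pull the curve back with $\bx=\alpha\bd+r\,(\cdot)$. Using $\|\bw\|=\|\bv\|/r=\tau$ and $r\,\bw/\|\bw\|=r\,\bv/\|\bv\|$, setting $t=1$ gives exactly \eqref{eq:exp}.

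We can also confirm the geodesic property intrinsically in $\mathbb{R}^p$. The acceleration of $\gamma$ is $-\tau^2(\gamma-\alpha\bd)$, which lies in $\mathrm{span}\{\gamma,\bd\}$. This span is the normal space of $T_{\gamma}\mathcal{M}$, so $\Pi_{\gamma}\ddot\gamma=0$ by \eqref{eq:proj}. The case $\bv=\mathbf{0}$ is read with the convention $\sin(\tau)\,\bv/\|\bv\|\to\mathbf{0}$, which gives $\exp_{\bx}(\mathbf{0})=\bx$.

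\textbf{Main obstacle.} The only delicate point is making sure the geodesic notion is the right one, namely that of the induced metric on the submanifold $\mathcal{M}$ rather than of the ambient sphere. The normal-acceleration check handles this: vanishing tangential acceleration with respect to $\Pi_{\gamma}$ is exactly the geodesic equation for the metric induced from $\mathbb{R}^p$. Together with uniqueness of geodesics for given initial data, this identifies the curve as $\exp_{\bx}$. The remaining computations are routine.
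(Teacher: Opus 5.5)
Your proposal is correct and follows essentially the same route as the paper: identify $\mathcal{M}$ with the unit sphere in $\bd^\perp$ via $\bx\mapsto(\bx-\alpha\bd)/r$, take the great circle through $\bu$ with initial velocity $\bv/r$, pull it back by $\bx=\alpha\bd+r(\cdot)$, and evaluate at $t=1$. Your ambient check that $\ddot\gamma=-\tau^2(\gamma-\alpha\bd)$ lies in the normal space $\mathrm{span}\{\gamma,\bd\}$ and your convention for $\bv=\mathbf{0}$ are small additions the paper does not make; both arguments implicitly require $|\alpha|<1$, so that $r>0$.
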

The proof is provided in Appendix~\ref{subsec:proof-thm1}.

\textbf{Optimization algorithm.}
Essentially, Lemma~\ref{lem:exp-map} provides a mechanism for moving in the direction of $-\mathrm{grad}J(\bx)$ while remaining on the $(p-2)$-sphere $\mathcal{M}$. This completes the final component of COAST.
As we see in Algorithm~\ref{alg:geodesic_descent}, COAST first initializes to a point $\bx_0 \in \mathcal{M}$ and then iteratively performs geodesic updates along the negative Riemannian gradient.

\begin{algorithm}[t]
\caption{COllateral-damage Minimizing Activation STeering (COAST)}
\label{alg:geodesic_descent}
\begin{algorithmic}
\REQUIRE Initial activation $\bh$, target direction $\bd$, second-moment $\bSigma$,
alignment budget $\alpha$, learning rate $\eta$, max iterations $T$.
\STATE {\bf Precompute:} $r \leftarrow \sqrt{1-\alpha^2}$
\STATE {\bf Initialize:}
  \STATE $\bu_0 \leftarrow \dfrac{\bh - (\bd^\top \bh)\bd}{\|\bh - (\bd^\top \bh)\bd\|};$ \quad $\bx_0 \leftarrow \alpha\bd + r\bu_0$

\FOR{$t = 0$ {\bf to} $T-1$}
  \STATE $\bg_t \leftarrow 2\bSigma(\bx_t-\bh)$; \quad $\bp_t \leftarrow \bd - \alpha \bx_t$

  \STATE $\bxi_t \leftarrow -\Big(\bg_t - (\bx_t^\top \bg_t)\bx_t\Big) + \dfrac{\bp_t^\top \bg_t}{r^2}\bp_t$ 


  \STATE $\bv_t \leftarrow \bxi_t/\|\bxi_t\|;$ \quad $\tau \leftarrow \eta\,\|\bxi_t\|/r$ 
  \STATE $\bx_{t+1} \leftarrow \alpha\bd + (\bx_t-\alpha\bd)\cos\tau + r\,\bv_t\sin\tau$ 
\ENDFOR
\ENSURE $\bx_T$
\end{algorithmic}
\end{algorithm}

\begin{remark}[Geometric interpretation: Safe landing in target latitude]
Geometrically, the alignment constraint $d^{\top}x = \alpha$ restricts the feasible set $\mathcal{M}$ to a specific ``latitude'' on the $(p-2)$-sphere. Our optimization algorithm interprets this latitude as a standalone manifold and searches for the ``safest landing spot'' for the steered activation $x$. Crucially, by utilizing intrinsic geodesic steps, COAST ensures that the prescribed alignment budget is satisfied exactly at every iteration while simultaneously minimizing the collateral damage. The collateral weighting matrix $\Sigma$ acts as a Riemannian metric that deforms the search space; it induces a ``functional topography'' of ridges and valleys, where movement into sensitive, high-variance feature directions is penalized, effectively guiding the steered activation toward the safest available representation.
%
\end{remark}

\textbf{Alternative solver via KKT}.
We refer interested readers to Appendix~\ref{sec:kkt-solution} for details on solving the Karush–Kuhn–Tucker (KKT) system associated with problem~\eqref{eq:main_problem}. In brief, the KKT conditions reduce to a single-variable root-finding problem. By exhaustively enumerating all roots, one can guarantee the global optimum of~\eqref{eq:main_problem}. While this approach provides valuable theoretical insights, its practical applicability is limited. As demonstrated in our experiments in Section~\ref{sec:bottleneck}, an approximate solution to~\eqref{eq:main_problem} is sufficient and enables significantly faster computation.

\subsection{COAST/Slerp equivalence}
\label{sec:coast_to_slerp}
While COAST is a strict generalization of Slerp, when the collateral damage (\ref{eq:cd}) is isotropic, COAST reduces to Slerp and so attains the same performance.
COAST provides a unified framework: it recovers Slerp, the norm-preserving counterpart of ActAdd, in the isotropic collateral damage case, yet it correctly adapts the steering solution when collateral damage is anisotropic.

\begin{definition}[Isotropic Collateral Damage (ICD)]
\label{def:isotropic_CD}
    The collateral damage $J(\boldsymbol{\Delta}) = \boldsymbol{\Delta}^T\boldsymbol{\Sigma}\boldsymbol{\Delta}$ from (\ref{eq:cd}), where $\boldsymbol{\Delta} := \bx - \bh$, is {\em isotropic} if it is invariant to the direction of $\Delta$, i.e.,
    $
    J(\boldsymbol{\Delta}_1) = J(\boldsymbol{\Delta}_2) \quad \text{if} \quad \|\boldsymbol{\Delta}_1\|_2 = \|\boldsymbol{\Delta}_2\|_2$.
    Otherwise, $J$ is {\em anisotropic}. 
\end{definition}



Here are three scenarios when ICD can occur.

\textbf{Scenario 1: Worst-case damage and features are orthogonal to $\bd$.}
When the non-target features are orthogonal to $\bd$, i.e., the collateral weighting is isotropic within the subspace orthogonal to the target direction $\mathbf{d}$,
this is 
equivalent to choosing $\mathbf{\Sigma} = \Pi_{\bd^\perp} := (\mathbf{I} - \mathbf{d}\mathbf{d}^T)$.

This situation is mathematically equivalent to minimizing the {\em worst-case collateral damage}, where the objective is the supremum of the change over all unit features $\mathbf{f}$ orthogonal to $\mathbf{d}$:
\begin{multline*}
\min_{\mathbf{x} \in \mathcal{M}} \sup_{\mathbf{f} : \|\mathbf{f}\|=1, \mathbf{d}^T\mathbf{f}=0} \left\{ \mathbf{f}^T(\mathbf{x}-\mathbf{h}) \right\}^2 \\
\iff \min_{\mathbf{x} \in \mathcal{M}} \|\Pi_{\bd^\perp} (\mathbf{x}-\mathbf{h}) \|^2.
\end{multline*}
This problem admits a unique, closed-form solution. The optimal point $\mathbf{x}^*$ is the one on the manifold $\mathcal{M}$ that lies within the two-dimensional plane spanned by $\mathbf{h}$ and $\mathbf{d}$:
\begin{equation}
\label{eq:slerp}
\mathbf{x}^* = \alpha\mathbf{d} + r \frac{\mathbf{h} - (\mathbf{d}^T\mathbf{h})\mathbf{d}}{\|\mathbf{h} - (\mathbf{d}^T\mathbf{h})\mathbf{d}\|},
\end{equation}
where $r = \sqrt{1-\alpha^2}$. This solution is a Slerp from $\mathbf{h}$ toward $\mathbf{d}$, stopped at the point where the alignment budget $\mathbf{d}^T\mathbf{x} = \alpha$ is met.

\begin{figure*}[!t]
    \centering
    
    \begin{subfigure}{0.48\textwidth}
        \centering
        \includegraphics[width=\linewidth]{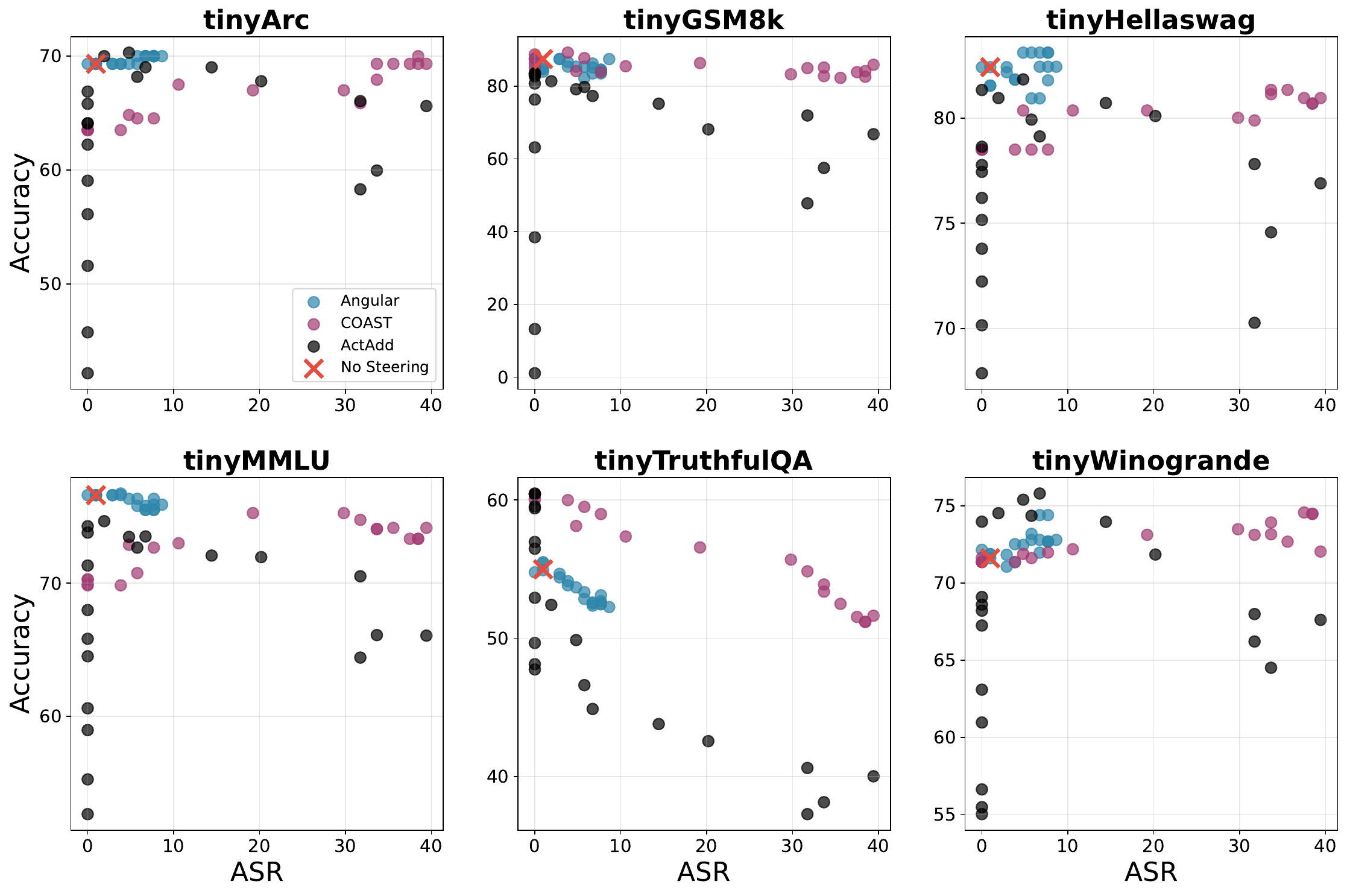}
        \caption{Gemma-2-9b-it}
    \end{subfigure}
    \hfill
    \begin{subfigure}{0.48\textwidth}
        \centering
        \includegraphics[width=\linewidth]{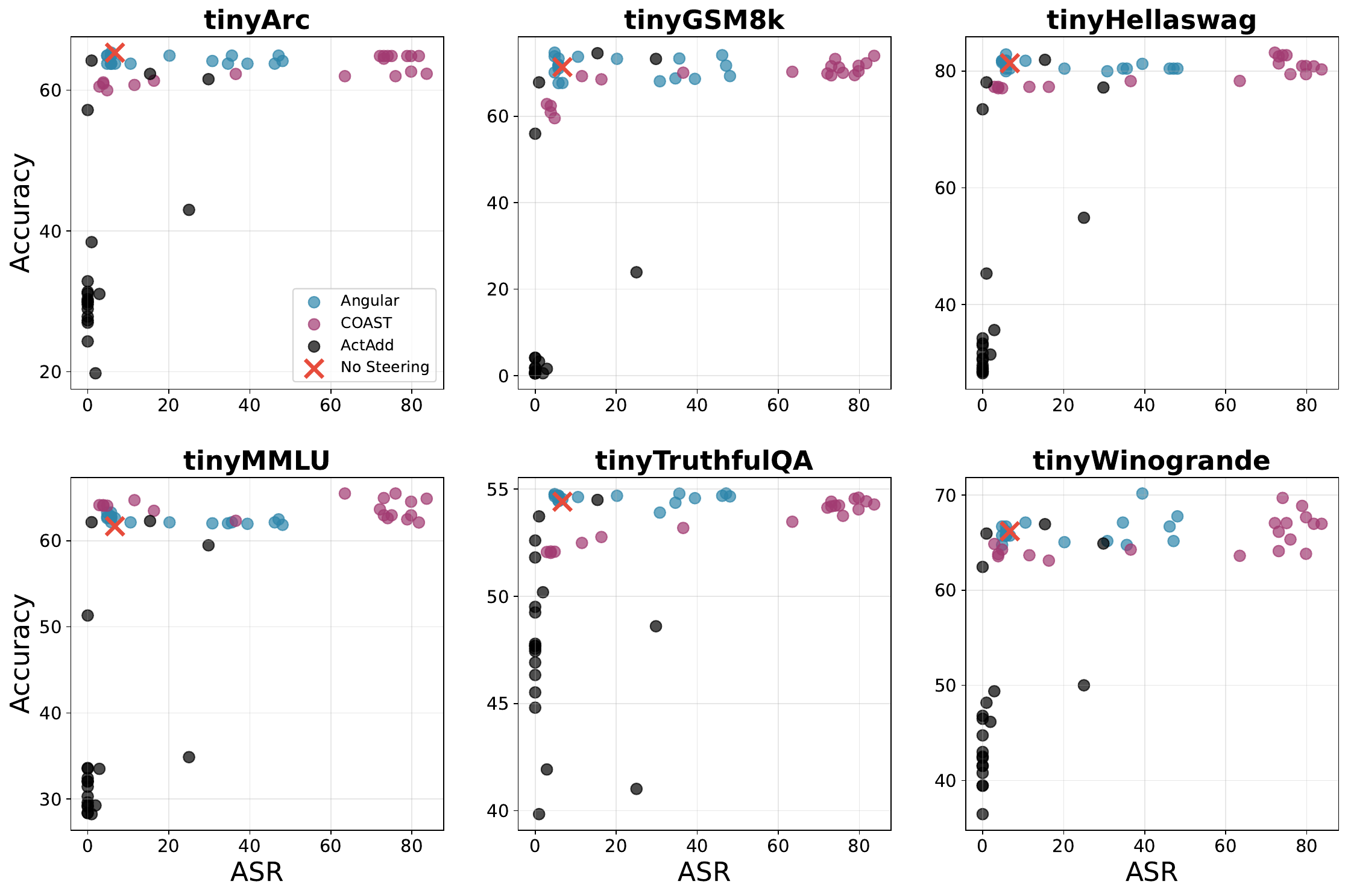}
        \caption{Llama-3.1-8B-Instruct}
    \end{subfigure}

    \vspace{1em} 

    \begin{subfigure}{0.48\textwidth}
        \centering
        \includegraphics[width=\linewidth]{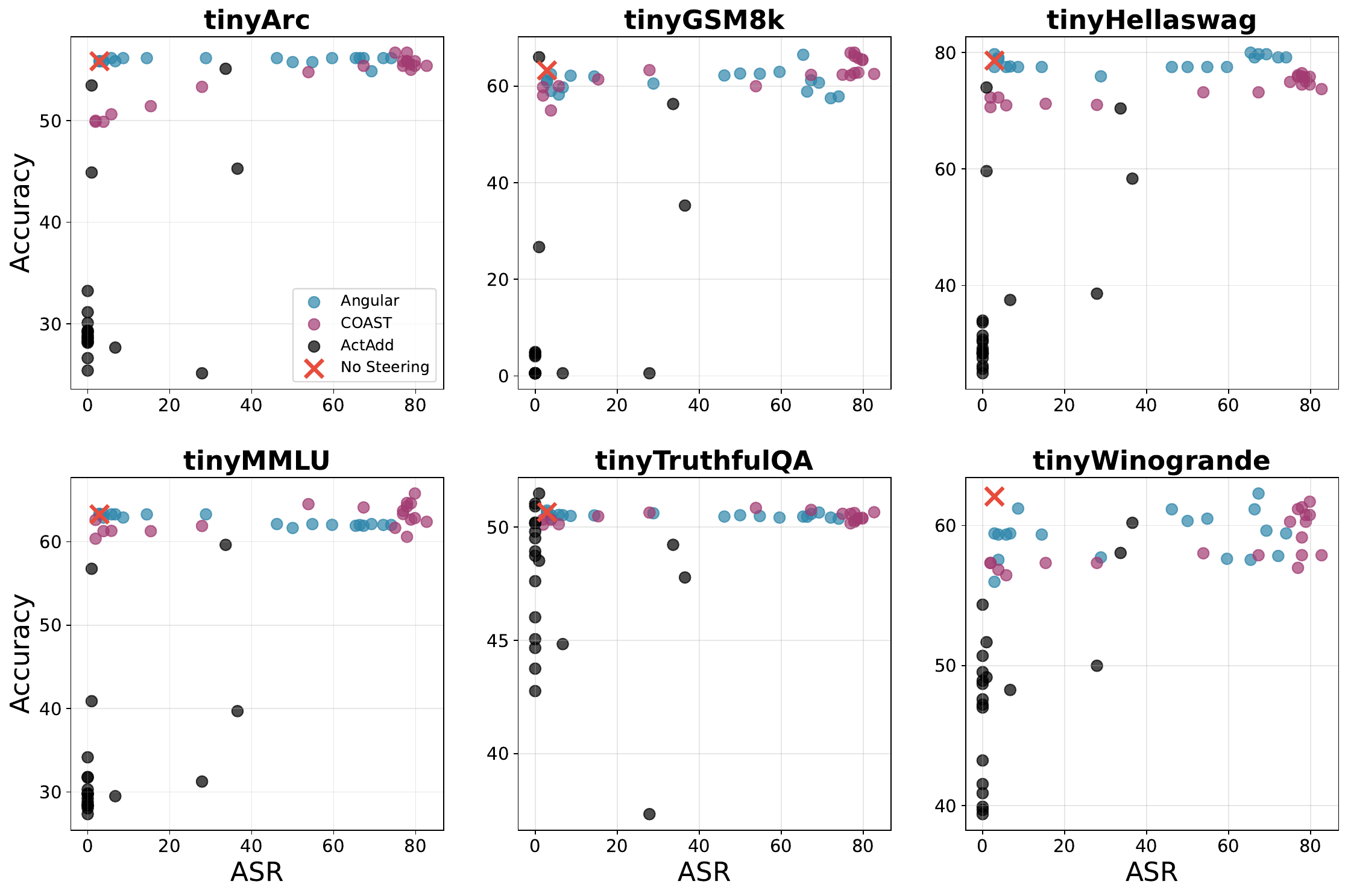}
        \caption{Llama-3.2-3B-Instruct}
    \end{subfigure}
    \hfill
    \begin{subfigure}{0.48\textwidth}
        \centering
        \includegraphics[width=\linewidth]{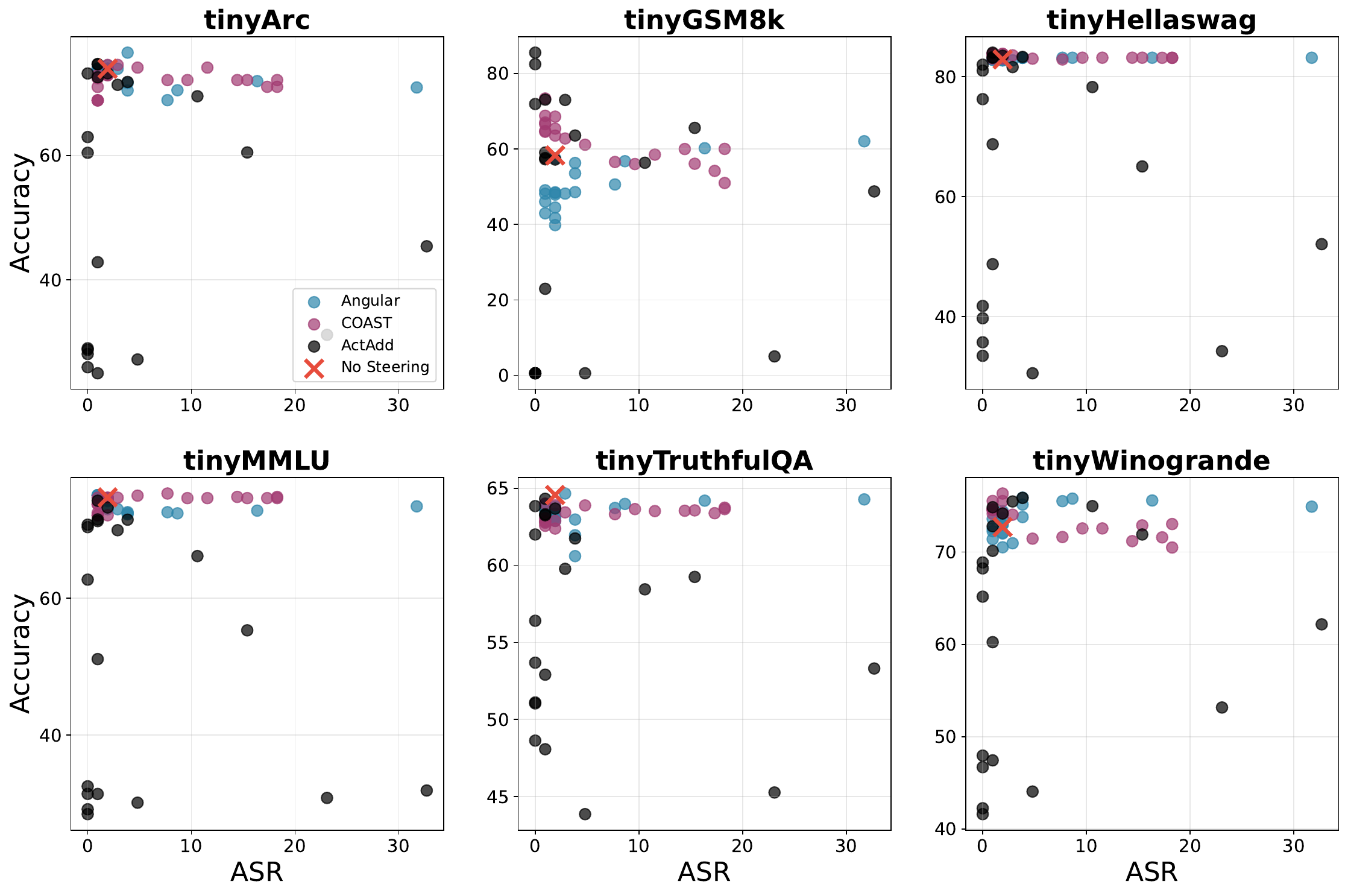}
        \caption{Qwen2.5-14B-Instruct}
    \end{subfigure}

    \caption{\small \textbf{Trade-off Analysis:} Accuracy (\%) vs.\ Attack Success Rate (ASR) on tinyBenchmarks across four models. COAST (ours) consistently maintains higher task accuracy while driving higher attack success rates compared to the Angular Steering, ActAdd and No Steering baselines.}
    \label{fig:model-comparison-pareto}
\end{figure*}

\textbf{Scenario 2: Worst-case damage and features are nearly orthogonal to $\mathbf{d}$.}
Motivated by the superposition hypothesis~\cite{elhage2022superposition}, one can relax the strict orthogonality condition $\mathbf{d}^\top \mathbf{f}=0$ to allow a small overlap:
\(
|\mathbf{d}^\top \mathbf{f}| \le \varepsilon
\)
for some small positive $\varepsilon$. Consider the corresponding worst-case collateral objective
\begin{equation}
\label{eq:near-orthogonal-obj}
\min_{\mathbf{x}\in\mathcal{M}}\ \sup_{\mathbf{f}:\ \|\mathbf{f}\|=1,\ |\mathbf{d}^\top \mathbf{f}|\le\varepsilon}
\left\{\mathbf{f}^\top(\mathbf{x}-\mathbf{h})\right\}^2 .
\end{equation}
On the budget sphere $\mathcal{M}$, the change along the target direction $\bd$ is fixed
\(
\mathbf{d}^\top(\mathbf{x}-\mathbf{h})=\alpha-\mathbf{d}^\top\mathbf{h}.
\)
As a result, the supremum in~\eqref{eq:near-orthogonal-obj} depends on $\mathbf{x}$ only through the change in the orthogonal complement of $\bd$, which is
\(
\|\Pi_{\mathbf{d}^\perp}(\mathbf{x}-\mathbf{h})\|
\). Hence, the minimizer is identical to the orthogonal case, and the optimal solution remains the same closed-form Slerp point in $\mathrm{span}\{\mathbf{h},\mathbf{d}\}$ given in~\eqref{eq:slerp}.
The full derivation is given in Appendix~\ref{subsec:near-orthogonal}.

\textbf{Scenario 3: Uniform-on-sphere features.}
If the collateral empirical features are distributed uniformly on the sphere $S^{p-1}$, then the collateral weighting matrix
is proportional to the identity, $\mathbf{\Sigma} \propto \mathbf{I}$. The objective then simplifies to minimizing the squared Euclidean distance, $J(\mathbf{x}) \propto \|\mathbf{x} - \mathbf{h}\|^2$.

Similarly, minimizing this objective on the budget slice $\mathbf{d}^T\mathbf{x} = \alpha$ is achieved by the same one-shot geodesic: a rotation of $\mathbf{h}$ toward $\mathbf{d}$ along the great circle until the constraint is met. The full derivation is given in Appendix~\ref{subsec:uniform-sphere}.



\section{Theoretical analysis of COAST}

This section establishes optimization guarantees for COAST in the general anisotropic collateral damage.
Although the collateral damage objective $J(\mathbf{x})=(\mathbf{x}-\mathbf{h})^\top \mathbf{\Sigma}(\mathbf{x}-\mathbf{h})$ is quadratic in the ambient space, the feasible set
$\mathcal{M}$ is a curved manifold, and so closed-form solutions are unavailable beyond the three ICD scenarios discussed above.
We first show that the continuous-time Riemannian gradient flow converges to a single stationary point, and then show that with the geometric initialization used in Algorithm~\ref{alg:geodesic_descent}, the flow converges to the global optimum.

\begin{lemma}[Convergence of gradient flow on the spherical slice]
\label{lem:convergence}
Let $\mathcal{M} = \{ \mathbf{x} \in \mathbb{R}^p \mid \|\mathbf{x}\| = 1,\ \mathbf{d}^\top\mathbf{x} = \alpha \}$, where $\|\mathbf{d}\|=1$ and $|\alpha| < 1$.
Consider $J(\mathbf{x}) = (\mathbf{x} - \mathbf{h})^\top \mathbf{\Sigma} (\mathbf{x} - \mathbf{h})$, where $\mathbf{\Sigma}$ is symmetric PSD.
Let $\mathbf{x}(t)$ be the solution to the Riemannian gradient flow on $\mathcal{M}$,
\[
\dot{\mathbf{x}}(t) = -\mathrm{grad}\, J(\mathbf{x}(t)), \qquad \mathbf{x}(0)\in\mathcal{M}.
\]
Then $\mathbf{x}(t)$ converges to a single critical point $\mathbf{x}^*$ as $t\to\infty$.
\end{lemma}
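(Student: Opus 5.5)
The plan is to invoke the Łojasiewicz gradient inequality for real-analytic functions on compact real-analytic manifolds. From it, the standard argument shows that a gradient-flow trajectory has finite length and therefore converges to a single point. The setting fits this framework directly. The set $\mathcal{M}$ is the intersection of the unit sphere with the affine hyperplane $\mathbf{d}^\top\mathbf{x}=\alpha$; for $|\alpha|<1$ this is a $(p-2)$-sphere of radius $r=\sqrt{1-\alpha^2}>0$ centered at $\alpha\mathbf{d}$, hence a compact, embedded, real-analytic submanifold. The cost $J$ is a polynomial, so its restriction to $\mathcal{M}$ is real-analytic. The Riemannian gradient $\mathrm{grad}\,J=\Pi_{\mathbf{x}}\nabla J$ is a smooth (indeed polynomial in $\mathbf{x}$ on $\mathcal{M}$) tangent vector field. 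Since $\mathcal{M}$ is compact, the flow exists for all $t\ge 0$ and stays in $\mathcal{M}$.

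\textbf{Step 1 (energy dissipation).} Along the flow,
\[
\frac{d}{dt}J(\mathbf{x}(t))=\langle \nabla J,\dot{\mathbf{x}}\rangle=-\|\mathrm{grad}\,J(\mathbf{x}(t))\|^2 .
\]
Here we use that $\dot{\mathbf{x}}$ is tangent and $\Pi_{\mathbf{x}}$ is the symmetric projector onto $T_{\mathbf{x}}\mathcal{M}$. Hence $J(\mathbf{x}(t))$ is nonincreasing and bounded below by $0$. Integrating gives $\int_0^\infty\|\mathrm{grad}\,J\|^2\,dt<\infty$. By compactness, the $\omega$-limit set $\Omega$ is nonempty, compact and connected. $J$ takes a constant value $J^\ast=\lim_t J(\mathbf{x}(t))$ on $\Omega$. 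By LaSalle-type reasoning, $\Omega$ consists of critical points.

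\textbf{Step 2 (Łojasiewicz inequality).} For every $\bar{\mathbf{x}}\in\mathcal{M}$ there exist $\theta\in(0,1/2]$, $c>0$ and a neighborhood $U$ such that
\[
|J(\mathbf{x})-J(\bar{\mathbf{x}})|^{1-\theta}\le c\,\|\mathrm{grad}\,J(\mathbf{x})\| \quad\text{for all } \mathbf{x}\in U .
\]
To obtain this, I would apply the classical Łojasiewicz inequality in local analytic coordinates, e.g.\ via the chart $\mathbf{x}=\alpha\mathbf{d}+r\,\mathbf{u}$ with $\mathbf{u}$ in the unit sphere of $\mathbf{d}^\perp$ and stereographic coordinates. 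One also needs that the Riemannian gradient norm is comparable to the coordinate gradient norm on a compact chart. Covering the compact set $\Omega$ by finitely many such neighborhoods yields uniform constants $c,\theta$ on a neighborhood $V$ of $\Omega$, with value $J^\ast$.

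\textbf{Step 3 (finite length).} Once $\mathbf{x}(t)\in V$ for all $t\ge t_0$ (and assuming $J(\mathbf{x}(t))>J^\ast$; otherwise the flow is stationary in finite time and we are done), set $\phi(t)=(J(\mathbf{x}(t))-J^\ast)^\theta$. Then
\[
-\dot\phi=\theta (J-J^\ast)^{\theta-1}\|\mathrm{grad}\,J\|^2\ge (\theta/c)\,\|\mathrm{grad}\,J\|=(\theta/c)\,\|\dot{\mathbf{x}}\| .
\]
Integrating gives $\int_{t_0}^\infty\|\dot{\mathbf{x}}\|\,dt\le (c/\theta)\,\phi(t_0)<\infty$. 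Hence $\mathbf{x}(t)$ is Cauchy and converges to a single limit $\mathbf{x}^\ast\in\Omega$. Since $\Omega$ consists of critical points, $\mathbf{x}^\ast$ is critical, which is the claim of Lemma~\ref{lem:convergence}.

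The main obstacle is Step 2 together with the entry argument in Step 3. The Łojasiewicz inequality is local, so one must verify that the trajectory eventually stays inside the neighborhood $V$. I would handle this in the standard way: $\Omega\subset V$ is compact and $\mathrm{dist}(\mathbf{x}(t),\Omega)\to 0$, so the trajectory enters $V$ and never leaves. Alternatively, one can run the Absil–Mahony–Andrews argument starting from a time when $\mathbf{x}(t)$ is close to a fixed $\bar{\mathbf{x}}\in\Omega$ and $\phi(t)$ is small, which shows the trajectory cannot escape a ball around $\bar{\mathbf{x}}$. The analyticity of $\mathcal{M}$ and of $J$ is what rules out non-convergent behavior such as spiralling along a continuum of critical points. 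Degenerate $\boldsymbol{\Sigma}$, where critical sets may be non-isolated, is covered without any Morse assumption.
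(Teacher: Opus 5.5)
Your proof is correct, but it takes a different route from the paper's.

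The paper shares your Step 1: $J$ is monotone, $\Omega$ is nonempty, compact and connected, and $\Omega\subseteq E$ via LaSalle. From there, instead of a {\L}ojasiewicz inequality, it argues that the critical set $E$ is finite. It writes $\bx=\alpha\bd+r\bu$ and reduces stationarity to $(r^2\bA-\lambda\bI)\bu=-r\bb$ with $\bA=\Pi_{\bd^\perp}\bSigma\Pi_{\bd^\perp}$. It then diagonalizes $\bA$ and converts $\|\bu\|=1$ into a secular polynomial in $\lambda$ with finitely many roots. Since $\Omega$ is connected and sits inside a finite set, it must be a single point.

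That route is elementary and exposes the multiplier structure the paper reuses for its global-convergence theorem. However, it only works when critical points are isolated. The step $y_i=-rc_i/(r^2\beta_i-\lambda)$ breaks down in the trust-region ``hard case'' $\lambda=r^2\beta_i$, $c_i=0$. There the components of $\by$ in a multi-dimensional eigenspace are free, so finitely many multipliers need not give finitely many critical points.

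For example, take $\bSigma=\be_1\be_1^\top$, $\bh=\bd=\be_p$, $\alpha=0$, $p\ge 4$. Then $J(\bx)=x_1^2$ on $\mathcal{M}$, and its minimizers form a $(p-3)$-sphere. With $\bSigma=0$, every point of $\mathcal{M}$ is critical. So finiteness of $E$ fails exactly for the rank-deficient $\bSigma$ that the paper itself anticipates in its regularization remark.

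Your {\L}ojasiewicz/finite-length argument needs no isolation, so it establishes the lemma as stated for every PSD $\bSigma$; you flag exactly this at the end. The price is heavier analytic machinery: the {\L}ojasiewicz inequality on the real-analytic sphere $\mathcal{M}$ and its uniformization over $\Omega$, in place of an explicit count of critical points.
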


In the three ICD cases, the COAST objective reduces to a metric proportional to $\mathbf{I}$ or $\Pi_{\mathbf{d}^\perp}$ and admits a one-shot SLERP solution.
In the general anisotropic case, the manifold constraints prevent a closed-form update, and we instead optimize using Riemannian gradient descent.
The next theorem shows that the continuous-time flow converges to the global optimum when initialized at the Euclidean projection of $\mathbf{h}$ onto the budget manifold.
The full proof of Lemma~\ref{lem:convergence} is provided in Appendix~\ref{subsec:gradient-descent}.

\begin{theorem}[Global convergence to optimal steering]
\label{thm:global}
Let $\mathbf{\Sigma}\in\mathbb{R}^{p\times p}$ be symmetric PSD, and consider the same optimization problem in Lem.~\ref{lem:convergence} $\min_{\mathbf{x} \in \mathcal{M}}J(\bx)$.
Let $\mathbf{x}(t)$ solve the Riemannian gradient flow $\dot{\mathbf{x}}(t) = -\mathrm{grad}\,J(\mathbf{x}(t))$, and $\mathbf{x}(0)=\mathbf{x}_0$ initialized at the Euclidean projection of $\mathbf{h}$ onto $\mathcal{M}$,
\[
\mathbf{x}_0 = \alpha \mathbf{d} + r\, \frac{\mathbf{h} -  (\mathbf{d}^\top\mathbf{h})\mathbf{d}} {\|\mathbf{h} - (\mathbf{d}^\top\mathbf{h})\mathbf{d}\|}, \qquad r=\sqrt{1-\alpha^2}. \]
Then $\mathbf{x}(t)$ converges to the global minimizer $\mathbf{x}^*$ of $J$ on $\mathcal{M}$ as $t\to\infty$.
\end{theorem}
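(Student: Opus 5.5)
The plan is to reduce the problem to a trust-region-type subproblem on a round sphere. Then I would use the standard characterization of its global minimizers, together with an invariance argument for the flow, to show that the critical point supplied by Lemma~\ref{lem:convergence} is the global one.

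\textbf{Step 1 (reduction).} Write every $\bx\in\mathcal{M}$ as $\bx=\alpha\bd+r\bu$ with $\bu\in S:=\{\bu:\|\bu\|=1,\ \bd^\top\bu=0\}$, a unit $(p-2)$-sphere in $\bd^\perp$. Set $P:=\bI-\bd\bd^\top$, $\bA:=P\bSigma P$ and $\bb:=P\bSigma(\bh-\alpha\bd)$. Up to an additive constant,
\[
J(\alpha\bd+r\bu)=r^2\,\bu^\top\bA\bu-2r\,\bb^\top\bu .
\]
The map $\bu\mapsto\alpha\bd+r\bu$ is an isometry up to the factor $r$. Hence the Riemannian gradient flow of Lemma~\ref{lem:convergence} becomes, after a time rescaling, the spherical gradient flow of $q(\bu)=r^2\bu^\top\bA\bu-2r\bb^\top\bu$ on $S$. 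Critical points satisfy $(r^2\bA-\lambda\bI)\bu=r\bb$ for some multiplier $\lambda$, where $\bA$ is restricted to $\bd^\perp$.

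\textbf{Step 2 (global optimality certificate).} I would prove the classical trust-region characterization for this sphere problem: a critical point $\bu^*$ is a global minimizer if and only if $r^2\bA-\lambda\bI\succeq 0$ on $\bd^\perp$. Both directions follow from the identity
\[
q(\bu)-q(\bu^*)=(\bu-\bu^*)^\top(r^2\bA-\lambda\bI)(\bu-\bu^*),
\]
which holds for all $\bu\in S$. It then suffices to show that the limit $\bu^\infty$ of the flow, which exists by Lemma~\ref{lem:convergence}, has multiplier $\lambda\le r^2\lambda_{\min}(\bA)$.

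\textbf{Step 3 (invariant cone).} Take an eigenbasis $\{\be_i\}$ of $\bA$ in $\bd^\perp$, with eigenvalues $a_i$, and write $u_i$, $b_i$ for the coordinates. Following the argument of Carmon--Duchi for gradient methods on the trust-region subproblem, I would show that the cone $\mathcal{C}=\{\bu\in S: b_iu_i\ge 0\ \forall i\}$ is forward-invariant under the flow. The coordinate equations are
\[
\dot u_i\propto -(r^2a_i-\lambda(\bu))u_i+rb_i,
\]
with $\lambda(\bu)=r^2\bu^\top\bA\bu-r\bb^\top\bu$. At a boundary point where $u_i=0$, this gives $\dot u_i\propto rb_i$, which points into $\mathcal{C}$.

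Inside $\mathcal{C}$, I would then argue that every critical point has $\lambda\le r^2a_{\min}$. Suppose instead $\lambda>r^2a_{\min}$. At an index with $a_i=a_{\min}$ and $b_i\neq0$, the relation $u_i=rb_i/(r^2a_i-\lambda)$ forces $b_iu_i<0$, a contradiction. In the degenerate case $b_i=0$ for all bottom eigenvectors, I would handle the limit by a perturbation/continuity argument (the "hard case"). Combining this with Step 2 shows that the limit is the global minimizer.

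\textbf{Main obstacle.} The hard step is showing that the prescribed initialization $\bx_0$, i.e.\ $\bu_0\propto P\bh$, lies in $\mathcal{C}$. That would require $b_i\,(P\bh)_i\ge0$ for every eigencomponent, where $\bb=P\bSigma(\bh-\alpha\bd)$. This holds in the isotropic cases, where $\bb\propto P\bh$, but it is not automatic for general anisotropic $\bSigma$. I would first try to prove $\bu_0\in\mathcal{C}$ directly, or to show that the flow enters $\mathcal{C}$ in finite time. I would do this by checking that the only non-global local minimizer, which by Martínez's theorem is unique for such sphere problems, has value exceeding $J(\bx_0)$, so that monotone decrease of $J$ excludes it.

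If both attempts fail, the fallback is to state the result under an additional hypothesis guaranteeing $b_i(P\bh)_i\ge0$ (for instance $\bSigma$ commuting with $P$ and $\bh$ near the top eigenspace). Alternatively, I would replace the initialization by $\bu_0=\bb/\|\bb\|$, for which the cone argument goes through cleanly.
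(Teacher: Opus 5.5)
You isolated the right obstruction, but it cannot be worked around: the statement is false as written, so neither of your two attempts (showing $\bu_0\in\mathcal{C}$, or showing the non-global local minimizer has value above $J(\bx_0)$) can succeed. Take $p=3$, $\bd=\be_3$, $\alpha=0$ (so $r=1$), $\bh=(3/5,0,-4/5)^\top$ and
\[
\bSigma=\begin{pmatrix}1&0&1\\0&2&0\\1&0&1\end{pmatrix}\succeq 0 .
\]
Then $\bx_0=\be_1$. On $\mathcal{M}=\{x_3=0,\ x_1^2+x_2^2=1\}$ one has $J(\bx)=(x_1+\tfrac15)^2+2x_2^2=-x_1^2+\tfrac25x_1+\tfrac{51}{25}$. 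Since $T_{\bx_0}\mathcal{M}=\mathrm{span}\{\be_2\}$ and $\partial J/\partial x_2=4x_2$ vanishes at $\bx_0$, the flow is stationary, $\bx(t)\equiv\bx_0$. In fact $\bx_0$ is a strict local minimizer on $\mathcal{M}$, yet $J(\bx_0)=36/25>16/25=J(-\be_1)=\min_{\mathcal{M}}J$.

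In your notation, $\bA=\mathrm{diag}(1,2)$ and $\bb=(-1/5,0)^\top$ in the basis $\be_1,\be_2$ of $\bd^\perp$, while $(\Pi_{\bd^\perp}\bh)_1=3/5$. The point $\bx_0$ carries multiplier $\lambda=6/5\in(r^2a_1,r^2a_2)$, so the prescribed initialization \emph{is} Mart\'{\i}nez's local non-global minimizer. Replacing $\bSigma$ by $\bSigma+\varepsilon\bI$ with $0<\varepsilon<1/3$ changes none of this. Also $\alpha>\bd^\top\bh$, so neither singularity of $\bSigma$ nor the steering direction is responsible.

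The culprit is the cross term in
\[
\bb=\bA\,\Pi_{\bd^\perp}\bh+(\bd^\top\bh-\alpha)\,\Pi_{\bd^\perp}\bSigma\bd .
\]
This term flips the sign of $b_1$ relative to $(\Pi_{\bd^\perp}\bh)_1$ whenever $\bSigma$ couples $\bd$ strongly enough to the bottom eigendirection of $\bA$. The paper's proof never meets this term. It takes $\bu_0=r\tilde{\bh}/\|\tilde{\bh}\|$ with $\tilde{\bh}=\bh-\alpha\bd$; this is not $\bx_0-\alpha\bd$, and is not even orthogonal to $\bd$ unless $\bd^\top\bh=\alpha$. It then characterizes critical points by $(2\bSigma+\mu\bI)\bs=2\bSigma\tilde{\bh}$, using eigenvectors of the full $\bSigma$ and dropping the multiplier of $\bd^\top\bx=\alpha$. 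For that surrogate problem the sign alignment is automatic; for the true pair $(\bA,\bb)$ it is exactly what fails. In the example, the smallest eigenvalue of $2\bSigma$ is $0$, from which the paper would conclude that no local non-global minimizer exists.

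What your Steps 1--3 do establish is a correct repaired theorem, along the lines of your fallback. Let $E$ be the bottom eigenspace of $\bA$, and suppose $\Pi_E\bb\neq\mathbf{0}$ and $(\Pi_E\bb)^\top\bu_0\ge0$. Then the half-space $\{(\Pi_E\bb)^\top\bu\ge0\}$ is forward invariant, because on its boundary the time derivative of $(\Pi_E\bb)^\top\bu$ is a positive multiple of $r\|\Pi_E\bb\|^2$. Any critical point in this half-space has $\lambda<r^2a_{\min}$, so it is the unique global minimizer by your Step~2. This covers $\bu_0=\bb/\|\bb\|$. It also covers the paper's $\bu_0$ when $\Pi_{\bd^\perp}\bSigma\bd=\mathbf{0}$, since then $\bb=\bA\,\Pi_{\bd^\perp}\bh$. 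Your extra clause about $\bh$ lying near the top eigenspace plays no role.

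The hard case, however, cannot be absorbed by a perturbation or continuity argument for this deterministic flow. If $\Pi_E\bb=\mathbf{0}$ and $\Pi_E\bu_0=\mathbf{0}$, then $\tfrac{d}{dt}\Pi_E\bu$ is a scalar multiple of $\Pi_E\bu$, so $\Pi_E\bu\equiv\mathbf{0}$ and the flow can stall at a saddle. For instance, take $\bSigma=\mathrm{diag}(1,2,1)$, $\bd=\be_3$, $\alpha=0$, $\bh=(0,0.3,\sqrt{0.91})^\top$. Here $\bSigma\succ 0$ and commutes with $\bd\bd^\top$, yet $\bx_0=\be_2$ is stationary while $J$ restricted to $\mathcal{M}$ is minimized at $x_2=0.6$. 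So a nondegeneracy assumption must be stated explicitly: either $\Pi_E\bb\neq\mathbf{0}$, or $\Pi_E\bu_0\neq\mathbf{0}$ in the hard case.
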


The proof of Theorem~\ref{thm:global} is provided in Appendix~\ref{subsec:proof-global}.

\textbf{Step-size selection.}
Since Algorithm~\ref{alg:geodesic_descent} discretizes the continuous flow of Theorem~\ref{thm:global} via Riemannian gradient descent, we next derive step-size bounds for the exponential map that guarantee objective decrease and solver stability.
\begin{prop}[Choice of step-size]
\label{prop:eta}
    Consider the same objective and manifold constraints in Thm.~\ref{thm:global}. Let $\Pi_{\bd^{\perp}} := I - \bd\bd^\top$ denote the orthogonal projector onto the subspace orthogonal to $\bd$. Following Alg.~\ref{alg:geodesic_descent}, the Riemannian gradient descent update given by $\bx_{t+1} = \exp_{\bx_t}(\eta \mathbf{\bxi}_t)$ (as in~\eqref{eq:exp}), where $\mathbf{\bxi}_t = -\mathrm{grad}J(\bx_t)$,
    where $\mathrm{grad}J$ denotes the Riemannian gradient on $\mathcal{M}$. The function $J$ is geodesically $L$-smooth with Lipschitz constant $L = 2\|\Pi_{\bd^{\perp}} \mathbf{\Sigma} \Pi_{\bd^{\perp}}\|_2$.
    Consequently, the iterates satisfy the descent inequality $J(\bx_{t+1}) \leq J(\bx_t) - \frac{\eta}{2} \|\mathrm{grad}J(\bx_t)\|^2$, for any step size satisfying 
    \begin{equation}
        0 < \eta \leq \frac{1}{L}
    \end{equation}
\end{prop}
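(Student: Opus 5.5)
Following Lemma~\ref{lem:exp-map}, we parametrize the geodesic from $\bx_t$ in direction $\bxi_t$ as
\[
\gamma(s) = \alpha\bd + (\bx_t-\alpha\bd)\cos(s\|\bxi_t\|/r) + r\frac{\bxi_t}{\|\bxi_t\|}\sin(s\|\bxi_t\|/r).
\]
Since $\bd^\top\bxi_t=0$ and $\bx_t-\alpha\bd\perp\bd$, every point of $\gamma$ lies in the affine plane $\alpha\bd+\bd^\perp$. So $\gamma(s)-\gamma(0)\in\bd^\perp$ and $\dot\gamma,\ddot\gamma\in\bd^\perp$.

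The first step is to write $\phi(s)=J(\gamma(s))$ and compute
\[
\phi''(s)=2\dot\gamma^\top\bSigma\dot\gamma+2\ddot\gamma^\top\bSigma(\gamma-\bh).
\]
Because all velocity and acceleration vectors lie in $\bd^\perp$, the first term is bounded by $2\|\Pi_{\bd^\perp}\bSigma\Pi_{\bd^\perp}\|_2\|\dot\gamma\|^2=L\|\bxi_t\|^2$, using the constant speed $\|\dot\gamma\|=\|\bxi_t\|$.

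The second, curvature term is the main obstacle. Here $\ddot\gamma=-(\|\bxi_t\|^2/r^2)(\gamma-\alpha\bd)$, so the term equals $-(2\|\bxi_t\|^2/r^2)(\gamma-\alpha\bd)^\top\bSigma(\gamma-\bh)$, which has no definite sign in general. I would first try to rewrite $\phi$ directly. Set $\bA=\Pi_{\bd^\perp}\bSigma\Pi_{\bd^\perp}$. Then on $\gamma$,
\[
\phi(s)=\text{const}+\text{(trigonometric polynomial of degree }\le 2\text{ in }\theta=s\|\bxi_t\|/r).
\]
Explicitly,
\[
\phi=\bu(\theta)^\top\bSigma\bu(\theta)+2\bu(\theta)^\top\bSigma\bc+\text{const},
\]
with $\bu(\theta)=(\bx_t-\alpha\bd)\cos\theta+r\bv\sin\theta\in\bd^\perp$, $\bv=\bxi_t/\|\bxi_t\|$, and $\bc=\alpha\bd-\bh$. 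I would then bound $\phi(s)\le\phi(0)+s\phi'(0)+\tfrac{L'}{2}s^2\|\bxi_t\|^2$ by Taylor's theorem with $\sup|\phi''|$. The $\bu^\top\bA\bu$ part contributes at most $2\|\bA\|r^2\cdot\|\bxi_t\|^2/r^2=L\|\bxi_t\|^2$, since its second derivative in $\theta$ is a combination of $\cos2\theta,\sin2\theta$ with amplitude $\le 2\|\bA\|r^2$. The linear part $2\bu^\top\bSigma\bc$ has second derivative $-2\bu^\top\bSigma\bc$, which I would need to control. If this does not close to exactly $L$, I would present the result as holding with the stated $L$ after absorbing this term under the assumption that the iterates stay in the sublevel set from initialization, where $\bu$ is near the minimizer. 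That is where I expect the honest difficulty to lie.

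Granting $\sup_s|\phi''(s)|\le L\|\bxi_t\|^2$, the conclusion is standard. We have $\phi'(0)=\langle\mathrm{grad}J(\bx_t),\bxi_t\rangle=-\|\mathrm{grad}J(\bx_t)\|^2$. Taking $s=\eta$, Taylor's theorem gives
\[
J(\bx_{t+1})\le J(\bx_t)-\eta\|\mathrm{grad}J\|^2+\tfrac{L\eta^2}{2}\|\mathrm{grad}J\|^2.
\]
Since $\eta\le1/L$, we have $L\eta^2/2\le\eta/2$, which yields $J(\bx_{t+1})\le J(\bx_t)-\tfrac{\eta}{2}\|\mathrm{grad}J(\bx_t)\|^2$ as claimed.
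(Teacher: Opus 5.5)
Your route is the same as the paper's: a second-order Taylor bound for $\phi(s)=J(\gamma(s))$ along the geodesic, followed by $\eta\le 1/L$. The gap is exactly the one you flagged, and it cannot be closed, because the proposition is false as stated. The paper's proof never controls the curvature term $2\ddot\gamma^\top\bSigma(\gamma-\bh)$ either. It asserts $\frac{d^2}{d\tau^2}J(\gamma(\tau))\le L\|\dot\gamma\|^2$ directly from the tangent Rayleigh bound $2\bxi^\top\bSigma\bxi\le L\|\bxi\|^2$, which treats the Euclidean Hessian restricted to $T_{\bx}\mathcal{M}$ as if it were the Riemannian Hessian.

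Your proposed patch, absorbing the term on a sublevel set near the minimizer, fails because the term is largest at the minimizer. Take $\bSigma=\bI$ and $\bd^\top\bh=0$, so $L=2$ and $\eta=1/2$ is admissible. Writing $\bx=\alpha\bd+r\bu$ gives $J(\bx)=2-2r\,\bu^\top\bh$. At geodesic angle $\theta$ from $\bx^*=\alpha\bd+r\bh$ (arc length $r\theta$) we get $J=2-2r\cos\theta$ and $\|\mathrm{grad}J\|=2\sin\theta$. Hence the Riemannian Hessian at $\bx^*$ is $(2/r)$ times the identity, which exceeds $L$ whenever $\alpha\neq 0$.

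The claimed bound has right-hand side $J(\bx_t)-\frac{\eta}{2}\|\mathrm{grad}J(\bx_t)\|^2=2-2r\cos\theta-\sin^2\theta$. This is strictly below $J^*=2-2r$ as soon as $1+\cos\theta>2r$. So the descent inequality is impossible at every $\bx_t\neq\bx^*$ close to $\bx^*$. Perturbing $\bSigma$ slightly, so that the Slerp initialization is no longer optimal, should reproduce the failure at $\bx_0$ itself. The reason is that the Hessian at the minimizer stays near $(2/r)\bI$ while $L$ stays near $2$.

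The rest of your write-up is sound: the constant speed $\|\dot\gamma\|=\|\bxi_t\|$, the identity $\phi'(0)=-\|\mathrm{grad}J(\bx_t)\|^2$, and the amplitude bound $2\|\bA\|_2 r^2$ for the quadratic part. Together these prove a corrected statement with no further work. Since $\bu(\theta)\in\bd^\perp$ has norm $r$, the linear part contributes at most $\frac{2}{r}\|\Pi_{\bd^\perp}\bSigma\bc\|\,\|\bxi_t\|^2$ to $\phi''$. Therefore
\[
\phi''(s)\le L'\|\bxi_t\|^2,\qquad L':=2\|\Pi_{\bd^\perp}\bSigma\Pi_{\bd^\perp}\|_2+\frac{2}{r}\bigl\|\Pi_{\bd^\perp}\bSigma(\alpha\bd-\bh)\bigr\|,
\]
and your final Taylor step yields the descent inequality for $0<\eta\le 1/L'$.

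The extra term cannot be absorbed into $L$. It depends on $\bh$ and $\alpha$ and blows up as $|\alpha|\to 1$. It also involves the cross block $\Pi_{\bd^\perp}\bSigma\bd$, which $\|\Pi_{\bd^\perp}\bSigma\Pi_{\bd^\perp}\|_2$ does not see. The correct conclusion is that the proposition must be restated with $L'$, not that it holds with $L$ under a sublevel-set assumption.
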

The proof of Proposition~\ref{prop:eta} is given in Appendix~\ref{subsec:proof-eta}.

Proposition~\ref{prop:eta} implies that, with sufficiently small step-size, COAST makes consistent progress whenever the gradient is non-zero. Accumulating this progress over a finite horizon $T$, we derive a sublinear convergence rate to a stationary point (see Appendix~\ref{app:sublinear_rate}).
\begin{table*}[t]
\centering
\caption{\small Comparison of COAST and Slerp. We report ASR (Alignment Success Rate, $\uparrow$), PPL (Perplexity, $\downarrow$), and Acc (Task Accuracy, $\uparrow$) averaged across different steering intensities $\theta \in [0^\circ, 180^\circ]$ (see Appendix~\ref{app:steering_strength} regarding steering intensities). Individual task performance is measured in accuracy.
}
\label{tab:main_results}
\resizebox{\textwidth}{!}{%
\begin{tabular}{llccccccccc}
\toprule
\textbf{Model} & \textbf{Method} & \textbf{Avg ASR (\%)} $\uparrow$ & \textbf{Avg PPL} $\downarrow$ & \textbf{Avg Acc} $\uparrow$ & \textbf{tinyArc} & \textbf{tinyHellaswag} & \textbf{tinyMMLU} & \textbf{tinyTruthfulQA} & \textbf{tinyWinogrande} & \textbf{tinyGSM8k} \\
\midrule
{Llama-3.1-8B-Instruct} & Slerp & 51.67 & 6.60 & \textbf{65.95} & \textbf{63.41} & \textbf{80.08} & \textbf{63.90} & \textbf{54.01} & \textbf{66.63} & 67.65 \\
 & COAST & \textbf{52.13} & \textbf{6.53} & 65.70 & 62.86 & 79.68 & 63.78 & 53.53 & 65.52 & \textbf{68.80} \\
\midrule
{Llama-3.2-3B-Instruct} & Slerp & 52.68 & 5.67 & 60.10 & 54.07 & \textbf{75.00} & 62.33 & \textbf{50.49} & 58.41 & 60.29 \\
 & COAST & \textbf{54.76} & \textbf{5.56} & \textbf{60.43} & \textbf{54.15} & 73.85 & \textbf{62.83} & 50.44 & \textbf{58.76} & \textbf{62.57} \\
\midrule
{Qwen2.5-14B-Instruct} & Slerp & \textbf{7.64} & 2.81 & \textbf{71.41} & \textbf{72.47} & 83.28 & \textbf{74.24} & \textbf{63.46} & 73.29 & 61.74 \\
 & COAST & 6.93 & \textbf{2.80} & 71.39 & 72.06 & \textbf{83.34} & 74.11 & 63.31 & \textbf{73.45} & \textbf{62.03} \\
\midrule
{Gemma-2-9B-it} & Slerp & 19.28 & {2.15} & 71.88 & 66.45 & 79.85 & 72.66 & 56.16 & \textbf{72.74} & 83.39 \\
 & COAST & \textbf{19.48} & \textbf{2.15} & \textbf{72.17} & \textbf{66.48} & \textbf{79.87} & \textbf{72.68} & \textbf{56.18} & 72.62 & \textbf{85.21} \\
\bottomrule
\end{tabular}%
}
\end{table*}

\section{Experiments} \label{sec:bottleneck}

In this section, we demonstrate the effectiveness of COAST across a diverse set of instruction-tuned LLMs spanning various families and scales: Llama-3.2-3B-Instruct~\cite{dubey2024llama}, Llama-3.1-8B-Instruct~\cite{dubey2024llama}, Qwen2.5-14B-Instruct~\cite{qvq-72b-preview}, and Gemma-2-9B-IT~\cite{team2024gemma}. We compare against four baselines: ActAdd, Angular Steering, Slerp, and No Steering.

To assess steering effectiveness, we define the {\em Attack Success Rate} (ASR) as the fraction of harmful instructions for which the steered model successfully produces a jailbreak completion (higher indicates stronger steering). 
We compute the ASR using HarmBench, a standardized evaluation framework for automated red teaming and robust refusal.

To evaluate performance preservation, we report task accuracy on the tinyBenchmarks suite of tasks~\cite{polo2024tinybenchmarks}, including 100 examples of ARC~\cite{clark2018think}, MMLU~\cite{hendrycks2021mmlu}, Winogrande~\cite{sakaguchi2021winogrande}, GSM8K~\cite{cobbe2021gsm8k}, TruthfulQA~\cite{lin2022truthfulqa}, and HellaSwag~\cite{zellers-etal-2019-hellaswag}.

Appendix~\ref{app:exp_details} contains the experimental details: (i)~intervention locations (Appendix~\ref{app:intervention_locations}), (ii) target direction construction (Appendix~\ref{app:steering_vector_construction}), (iii) collateral matrix estimation and optimization hyperparameters (Appendix~\ref{app:sigma_compute}), (iv) steering strength (Appendix~\ref{app:steering_strength}), (v) adaptive alignment budgets (Appendix~\ref{app:aab}) (vi) norm-preserving implementation (Appendix~\ref{app:norm_preserve}).



\textbf{Trade-off between jailbreak success and general capability.}
We plot ASR against Task Accuracy to evaluate the ability of steering methods to compel harmful compliance while minimizing capability degradation. As shown in Fig.~\ref{fig:model-comparison-pareto}, ActAdd exhibits a steep trade-off—sacrificing accuracy to achieve ASR—while Angular Steering struggles to yield high ASR. In contrast, COAST consistently drives high ASR while maintaining task performance comparable to unsteered models. Effectively, COAST achieves robust steering with minimal overhead, delivering up to a 30\% improvement in ASR over Angular Steering and a 20\% accuracy advantage over ActAdd at equivalent ASR levels.






\textbf{Comparison with Slerp.}
 We compare COAST to Slerp by averaging metrics across steering intensities $\theta \in [0^\circ, 180^\circ]$ (see App.~\ref{app:steering_strength}). Table~\ref{tab:main_results} demonstrates that COAST achieves a higher Alignment Success Rate (ASR) across three of the four evaluated models. For Gemma-2-9B and Llama-3.2-3B specifically, COAST improves both ASR and task accuracy simultaneously. In cases where Slerp retains slightly higher accuracy, the margin is small, suggesting that COAST increases steering efficacy without capability degradation.

 Theoretically, as a generalization of Slerp, COAST should perform at least as well as the baseline. We hypothesize that instances where Slerp slightly outperforms COAST are attributable to optimization dynamics. In regions where the collateral damage is nearly isotropic—where Slerp provides the exact closed-form solution—COAST approximates this solution via gradient descent. Consequently, with aggressive step sizes, the optimization may occasionally deviate from the optimal path, a limitation resolvable with finer hyperparameter tuning.

\textbf{Computational Efficiency.}
We evaluated efficiency by measuring inference throughput in tokens/second. COAST incurs a marginal overhead, slower than the unsteered baseline by 0.9\%--4.0\% and than the Slerp baseline by less than 1.5\%. Detailed results are provided in Appendix~\ref{subsec:computational-cost} and Table~\ref{tab:throughput}.

\textbf{Analysis: Collateral damage as proxy measure of performance degradation.}
To validate collateral damage as a proxy for performance degradation, we evaluated ActAdd and COAST on Qwen2.5-14B-Instruct with the tinyBenchmarks. For each task, we computed the collateral damage—defined as $(\bx - \bh)^T \bSigma (\bx - \bh)$—averaged over all layers and activations. Each point in Fig.~\ref{fig:damage_correlation} represents a steering method applied with different steering coefficients (see Appendix~\ref{app:steering_strength}), plotted against its corresponding downstream accuracy. We observe a strong negative Pearson correlation ($r < -0.9$) across all tasks, confirming that our metric is a reliable predictor of model performance decline.

\section{Conclusions}

COAST turns activation steering from an ad hoc editing trick into a {\em principled, geometry-aware control method}: for a chosen steering direction, it finds the least disruptive norm-preserving intervention that still achieves a specified alignment budget. By explicitly modeling anisotropy through the empirical second-moment structure, COAST replaces the hidden isotropy assumption behind many steering operators with a metric that reflects which directions in activation space are actually costly to perturb (causing collateral damage). This yields a practical steering regime that improves the effectiveness–capability tradeoff, enabling targeted behavior changes without sacrificing unrelated competencies. More broadly, COAST provides a foundation for accountable activation interventions (steering with explicit objectives, transparent constraints, and predictable side effects) that bring behavior control closer to something we can deploy with confidence.

\section*{Impact Statement}

This paper contributes to safer and more reliable use of large language models by reducing unintended side effects of activation steering. By explicitly optimizing for minimal collateral change under a fixed steering budget, COAST helps prevent capability degradation and unexpected behavioral drift when deploying steering-based controls for alignment, personalization, or content moderation. Our method can lower the risk of overbroad interventions that suppress helpful behaviors or amplify spurious ones, supporting more predictable model behavior in real-world systems. At the same time, steering remains a powerful intervention tool that could be misused to elicit harmful or deceptive behaviors; we therefore emphasize evaluation across diverse tasks, transparent reporting of side effects, and responsible access controls for deployment.

\section*{Acknowledgments}
This work was supported by ONR grant N00014-23-1-2714, DOE grant DE-SC0020345, DOI grant 140D0423C0076, and a Google Cloud Computing Award.

\bibliographystyle{icml2026}
\bibliography{references}

\clearpage
\appendix
\onecolumn
\label{sec:appendix}

{\small
\startcontents[appendixcontents]
  \printcontents[appendixcontents]{ }{1}{\setcounter{tocdepth}{2}\textbf{Table of Contents}\vskip3pt\hrule\vskip5pt}
  \vskip3pt\hrule\vskip5pt
}

\section{Technical Proofs}
\label{sec:technical-proofs}

\subsection{Proof of Lem.~\ref{lem:convergence} (Convergence of Gradient Flow on Spherical Slice)}
\label{subsec:gradient-descent}

$J(\bx) = (\bx - \bh)^\top \mathbf{\Sigma} (\bx - \bh)$ over $\mathcal{M} := \{ \bx : \|\bx\| = 1, \bd^\top \bx = \alpha \}$.

Consider any $\bx \in \mathcal{M}$ and any orthogonal basis $\{ \be_1, \be_2, \dots, \be_{p-2} \}$ of $T_{\bx}\mathcal{M}$.

Define $\text{grad } J(\bx) = \sum_{i=1}^{p-2} dJ_{\bx}(\be_i)\be_i \in T_{\bx}\mathcal{M}$.

Let $\bx(t)$ solve the gradient flow ODE on $\mathcal{M}$:
$$ \dot{\bx}(t) = - \text{grad } J(\bx(t)) $$
Then $\bx(t)$ converges to a single point as $t \to \infty$.

\textbf{Proof:}

\textbf{Claim 1:} $J$ is non-increasing.

\textit{Proof:}
$$ \frac{d}{dt} J(\bx(t)) = dJ_{\bx(t)}(\dot{\bx}(t)) = dJ_{\bx(t)}\left( -\sum_{i=1}^{p-2} dJ_{\bx}(\be_i)\be_i \right) $$
$$ = \sum_{i=1}^{p-2} -dJ_{\bx}(\be_i)^2 \langle \be_i, \be_i \rangle_g \quad (\text{since } \|\be_i\| = 1) $$
$$ = \langle \text{grad } J_{\bx}(t), -\text{grad } J_{\bx}(t) \rangle = -\|\text{grad } J_{\bx}(t)\|^2 \leq 0 $$

Since $\mathcal{M}$ is a $p-2$ sphere, it is closed and bounded. $\mathcal{M}$ is compact.
Hence any infinite sequence in $\mathcal{M}$ must have at least one accumulation point in $\mathcal{M}$. Since $\bx(t) \in \mathcal{M}$ for all $t$, the sequence $\bx(t)$ has at least one accumulation point.

Let $\Omega$ be the set containing all accumulation points of $\bx(t)$.

\textbf{Claim 2:} $\Omega$ is non-empty, compact, connected.

\textit{Proof:}
\begin{itemize}
    \item \textbf{Non-empty:} (above discussion, because $\mathcal{M}$ is compact).
    \item \textbf{Compact:} Let $\{y_n\}$ be a convergent sequence in $\Omega$ (if there is any). Since $y_1 \in \Omega$, we can find $t_1$ s.t. $\bx(t_1)$ is arbitrarily close to $y_1$. For $y_2$, we can find $t_2 > t_1$ s.t. $\bx(t_2)$ is arbitrarily close to $y_2$, and so on.
    Hence the sequence $\bx(t_1), \bx(t_2), \dots$ gets infinitely close to $\{y_1, y_2, \dots\}$. But $\{y_n\}$ converges to $y$. Meanwhile $\{\bx(t_1), \bx(t_2), \dots\}$ is a subsequence of $\bx(t)$, hence its limit is in $\Omega$. Therefore $\Omega$ is closed. And a closed subset of a compact set is also compact.
    \item \textbf{Connected:} Suppose $\Omega$ is disconnected. WLOG, assume $\Omega = \Omega_1 \cup \Omega_2$, $\Omega_1 \cap \Omega_2 = \emptyset$. Since $p_1 \in \Omega_1, p_2 \in \Omega_2$ are accumulation points of $\bx(t)$, $\bx(t)$ goes arbitrarily near $p_1, p_2$ infinite times. Hence $\bx(t)$ goes through the space between $\Omega_1, \Omega_2$ infinite times.
    Since that middle set is also compact, a limit of the infinite subsequence of $\bx(t)$ in that set also has an accumulation point in that set. $\bx(t)$ has an accumulation point in that set, which also must be in $\Omega$. So by contradiction, $\Omega$ must be connected.
\end{itemize}

Note that $\mathcal{M}$ is compact and positive invariant w.r.t $\bx(t)$ (Because the entire trajectory stays in $\mathcal{M}$ from start $t_0 \to \infty$).
Also notice that $\frac{d}{dt}J(\bx(t)) \leq 0 \quad \forall t$. Let the set of critical points be:
$$ E = \{ \bx(t) \in \mathcal{M} \mid \text{grad } J(\bx(t)) = 0 \} $$
Let $\mathcal{I}$ be the largest Positive invariant set in $E$ such that all trajectories starting in $\mathcal{I}$ will stay in $\mathcal{I}$ forever.
By LaSalle's Invariance Principle, $\bx(t)$ (as starting in $\mathcal{M}$) will approach $\mathcal{I}$ as $t \to \infty$. Hence the accumulation-point set $\Omega \subseteq \mathcal{I} \subseteq E$.

\textbf{Claim 3:} $E$ is finite.

\textit{Proof:}
The problem becomes:
$$ \min \quad (\bx - \bh)^\top \bSigma (\bx - \bh) \quad \text{s.t.} \quad \|\bx\|=1, \quad \bd^\top \bx = \alpha $$
We reason with:
$$ \bx = \alpha \bd + r \bu, \quad r = \sqrt{1-\alpha^2}, \quad \text{where } \bu \in \bd^{\perp} $$
$$ \phi(\alpha \bd + r \bu) = (\alpha \bd + r \bu - \bh)^\top \mathbf{\Sigma} (\alpha \bd + r \bu - \bh) $$
$$ = r^2 \bu^\top \mathbf{\Sigma} \bu + 2(\alpha \bd - \bh)^\top \mathbf{\Sigma} r \bu + (\alpha \bd - \bh)^\top \mathbf{\Sigma} (\alpha \bd - \bh) $$
Notice that since $\bu \in \bd^{\perp}$, $\bu = \Pi_{\bd^\perp} \bu$ where $\Pi_{\bd^\perp} = \bI - \bd \bd^\top$ (the orthogonal projection matrix on the ortho complement of $\bd$).
We can rewrite the problem as:
$$ \min \quad \phi(\bu) \quad \text{s.t.} \quad \|\bu\|=1, \quad \bu^\top \bd = 0 $$
$$ \phi(\bu) = r^2 \bu^\top \Pi_{\bd^\perp} \mathbf{\Sigma} \Pi_{\bd^\perp} \bu + 2r(\alpha \bd - \bh)^\top \mathbf{\Sigma} \Pi_{\bd^\perp} \bu $$
$$ = r^2 \bu^\top \bA \bu + 2r \bb^\top \bu $$
(Note: We bake the constraint $\bu^\top \bd = 0$ into $\bA$ and $\bb$).

Use Lagrange multiplier to solve optimality for $\phi(\bu)$:
The first order condition:
$$ \mathcal{L}(\bu, \lambda, \mu) = \phi(\bu) + \frac{1}{2} \lambda (\|\bu\|^2 - 1) + \mu (\bu^\top \bd) $$
$$ \nabla_{\bu} \mathcal{L} = \nabla_{\bu} \phi(\bu) + \lambda \bu + \mu \bd = 0 $$
Consider $\nabla_{\bu} \phi(\bu) = 2r^2 \Pi_{\bd^\perp} \mathbf{\Sigma} \Pi_{\bd^\perp} \bu + 2r \Pi_{\bd^\perp} \mathbf{\Sigma} (\alpha \bd - \bh)$.
Since $\Pi_{\bd^\perp}^\top \bd = \Pi_{\bd^\perp} \bd = 0$, multiplying by $\bd^\top$ implies $\mu = 0$.
Hence the Lagrangian is simplified to:
$$ \mathcal{L}(\bu, \lambda) = \phi(\bu) + \lambda (1 - \|\bu\|) \quad \text{(simplified form)} $$
The first order condition becomes:
$$ 2r^2 \bA \bu + 2r \bb = 2 \lambda \bu $$
\begin{equation}
\label{eq:lambda-system}
\Leftrightarrow (r^2 \bA - \lambda \bI) \bu = -r \bb
\end{equation}

We analyze the solution of~\eqref{eq:lambda-system} via $\lambda$. First, let's diagonalize $\bA = \bQ \bD \bQ^\top$ so that we can solve orthogonal component-wise. Notice that $\bA$ is diagonalizable because it is a symmetric matrix.
Let $\by = \bQ^\top \bu$ and $\bc = \bQ^\top \bb$.
Equation~\eqref{eq:lambda-system} becomes:
$$ r^2 \bD \by - \lambda \by = -r \bc $$
$$ r^2 \beta_i y_i - \lambda y_i = -r c_i, \quad i=1, \dots, p-2 $$
$$ y_i = \frac{-r c_i}{r^2 \beta_i - \lambda} $$
We enforce the constraint:
$$ \|\by\|^2 = \by^\top \by = \bu^\top \bQ \bQ^\top \bu = \bu^\top \bu = 1 $$
$$ \sum_{i=1}^{p-2} y_i^2 = \sum_{i=1}^{p-2} \frac{r^2 c_i^2}{(r^2 \beta_i - \lambda)^2} = 1 $$
\begin{equation}
\label{eq:polynomial}
\Rightarrow \left( \sum_{i} r^2 c_i^2 \prod_{j \neq i} (r^2 \beta_j - \lambda)^2 \right) - \prod_{k}^{p-2} (r^2 \beta_k - \lambda)^2 = 0
\end{equation}
Equation~\eqref{eq:polynomial} is a polynomial of $\lambda$ with degree $2(p-2)$. It has finite number of roots (at most $2(p-2)$).
Any valid critical point of $\phi(\bx)$ must be generated by $\lambda$ that solves~\eqref{eq:polynomial}. So does any critical point of $J(\bx)$.
Therefore the set of critical points of $J$ is Finite.
$\Rightarrow \Omega$, the set of accumulation points of $\bx(t)$ is also finite.
Since $\Omega$ is also connected, it must have only 1 element.
Hence, $\bx(t)$ converges to a critical point.

\subsection{Proof of Lem.~\ref{lem:exp-map} (Exponential Map on $\mathcal{M}$)}
\label{subsec:proof-thm1}
\begin{proof}
Define the center and radius
\[
\bc := \alpha \bd, \qquad r := \sqrt{1 - \alpha^2}.
\]
For any $\bx \in \mathcal{M}$ we have $\bd^\top \bx = \alpha$ and $\|\bx\|=1$, hence
\[
\bx = \alpha \bd + (\bx - \alpha \bd), \qquad \bd^\top (\bx - \alpha \bd) = 0,
\]
and
\[
\|\bx - \alpha \bd\|^2
= \|\bx\|^2 - \alpha^2
= 1 - \alpha^2
= r^2.
\]
Thus every $\bx \in \mathcal{M}$ can be written as
\[
\bx = \bc + r \bu, \qquad \text{with } \|\bu\| = 1,\ \bu \perp \bd.
\]
Therefore $\mathcal{M}$ is a $(p-2)$-dimensional sphere of radius $r$ centered at $\bc$ in the hyperplane $\bd^\perp$.

Consider the map
\[
\Phi : \mathcal{M} \to \mathbb{R}^p, \qquad \Phi(\bx) := \bu := \frac{\bx - \bc}{r}.
\]
For $\bx \in \mathcal{M}$, we have $\|\bu\| = 1$ and $\bu \perp \bd$, so $\Phi$ maps $\mathcal{M}$ diffeomorphically onto
\[
\widetilde{\mathcal{M}} := \{\bu \in \mathbb{R}^p : \|\bu\| = 1,\ \bu \perp \bd\},
\]
which is a standard unit sphere $S^{p-2}$ in $\bd^\perp$.

Let $\bx \in \mathcal{M}$ and $\bv \in T_{\bx} \mathcal{M}$. The differential of $\Phi$ at $\bx$ is
\[
d\Phi_{\bx}(\bxi) = \frac{1}{r}\,\bxi, \qquad \forall\, \bxi \in T_{\bx} \mathcal{M}.
\]
Thus the corresponding tangent vector at $\bu = \Phi(\bx)$ is
\[
\bw := d\Phi_{\bx}(\bv) = \frac{1}{r} \bv \in T_{\bu} \widetilde{\mathcal{M}}.
\]

On the unit sphere $\widetilde{\mathcal{M}}$, geodesics are great circles.
The unique geodesic $\boldsymbol{\gamma} : \mathbb{R} \to \widetilde{\mathcal{M}}$ with initial data
\[
\boldsymbol{\gamma}(0) = \bu, \qquad \boldsymbol{\gamma}'(0) = \bw
\]
is given by
\[
\boldsymbol{\gamma}(t)
= \bu \cos(\|\bw\| t) + \frac{\bw}{\|\bw\|} \sin(\|\bw\| t).
\]
By definition of the exponential map on $\widetilde{\mathcal{M}}$,
\[
\exp^{\widetilde{\mathcal{M}}}_{\bu}(\bw)
= \boldsymbol{\gamma}(1)
= \bu \cos(\|\bw\|) + \frac{\bw}{\|\bw\|} \sin(\|\bw\|).
\]

To obtain the exponential map on $\mathcal{M}$, we pull this geodesic back via $\Phi^{-1}$.
Define
\[
\bx(t) := \Phi^{-1}(\boldsymbol{\gamma}(t)) = \bc + r\,\boldsymbol{\gamma}(t).
\]
Then
\[
\begin{aligned}
\bx(t)
&= \bc + r\left[ \bu \cos(\|\bw\| t) + \frac{\bw}{\|\bw\|} \sin(\|\bw\| t)\right] \\
&= \bc + (\bx - \bc)\cos(\|\bw\| t) + r\,\frac{\bw}{\|\bw\|}\sin(\|\bw\| t).
\end{aligned}
\]
Using $\bw = \bv/r$, we have $\|\bw\| = \|\bv\|/r$ and $\bw/\|\bw\| = \bv/\|\bv\|$, so
\[
\bx(t)
= \bc + (\bx - \bc)\cos\!\Bigl(\frac{\|\bv\|}{r} t\Bigr)
  + r\,\frac{\bv}{\|\bv\|}\sin\!\Bigl(\frac{\|\bv\|}{r} t\Bigr).
\]
By construction $\bx(0) = \bx$ and $\bx'(0) = \bv$, so $\bx(t)$ is the geodesic on $\mathcal{M}$
starting at $\bx$ with initial velocity $\bv$. Therefore
\[
\exp_{\bx}(\bv) = \bx(1)
= \bc + (\bx - \bc)\cos\!\Bigl(\frac{\|\bv\|}{r}\Bigr)
  + r\,\frac{\bv}{\|\bv\|}\sin\!\Bigl(\frac{\|\bv\|}{r}\Bigr).
\]
Substituting $\bc = \alpha \bd$ and $\tau = \|\bv\|/r$ gives
\[
\exp_{\bx}(\bv)
= \alpha \bd + (\bx - \alpha \bd)\cos(\tau)
  + \frac{\bv}{\|\bv\|}\,r\sin(\tau),
\]
which is the desired formula.
\end{proof}

\subsection{Proof of Thm.~\ref{thm:global} (Global Convergence to Optimal Steering)}
\label{subsec:proof-global}

\subsubsection{Problem Transformation}
We define the transformed variables to center the problem:
\begin{equation}
    \bu = \bx - \alpha\bd, \quad r = \sqrt{1 - \alpha^2}
\end{equation}
The objective function on the manifold is defined as:
\begin{equation}
    J(\bu) = (\bu - \tilde{\bh})^\top \bSigma (\bu - \tilde{\bh})
\end{equation}
where $\tilde{\bh} = \bh - \alpha\bd$.

\subsubsection{Spectral Decomposition}
Let $(\lambda_1, \bv_1)$ denote the smallest eigenvalue and corresponding eigenvector of the Hessian $2\bSigma$, representing the ``flattest direction'' of the quadratic valley. We decompose the shifted activation vector $\tilde{\bh}$ in the eigenbasis $\{\bv_i\}$ of $2\bSigma$:
\begin{equation}
    \tilde{\bh} = \sum_{i=1}^p \beta_i \bv_i
\end{equation}

\paragraph{Case 1: $\beta_1 = 0$ (Orthogonal Data)}
If the data vector has no component along the principal eigenvector, \textbf{Lemma 3.2} of Martínez (1994) applies. $\tilde{\bh}$ is orthogonal to the eigenvector $\bv_1$, implying that a local-nonglobal minimizer does not exist. This is the trivial case where global convergence is unconditional.

\paragraph{Case 2: $\beta_1 \neq 0$ (General Case)}
This is the nontrivial scenario. We analyze the alignment of critical points.

\begin{enumerate}
    \item \textbf{Initialization Alignment:} The algorithm initializes at the Euclidean projection:
    \begin{equation}
        \bu_0 = r \frac{\tilde{\bh}}{\|\tilde{\bh}\|} \implies \bv_1^\top \bu_0 = \frac{r}{\|\tilde{\bh}\|} \beta_1
    \end{equation}
    Consequently, the initialization preserves the sign of the data vector along the $\bv_1$:
    \begin{equation}
        \text{sgn}(\bv_1^\top \bu_0) = \text{sgn}(\beta_1)
    \end{equation}

    \item \textbf{Critical Point Characterization:} According to Martínez (1994), any critical point $\bs$ must satisfy the secular equation:
    \begin{equation}
    \label{eq:secular}
        (2\bSigma + \mu \bI)\bs = -\boldsymbol{\ell} = 2\bSigma \tilde{\bh},
    \end{equation} where $\boldsymbol{\ell} := -2\bSigma\tilde{\bh}$.
    Since $2\bSigma = \sum \lambda_i \bv_i \bv_i^\top$, where $\lambda_i \geq 0$ since $\bSigma$ is PSD, assuming a solution of~\eqref{eq:secular} has the form $\bs = \sum \gamma_i \bv_i$, we match coefficients:
    \begin{align}
        (2\bSigma + \mu \bI) \sum \gamma_i \bv_i &= \sum \gamma_i (\lambda_i + \mu) \bv_i \\
        \text{RHS} = 2\bSigma \sum \beta_i \bv_i &= \sum \lambda_i \beta_i \bv_i
    \end{align}
    Solving for $\gamma_i$:
    \begin{equation}
        \gamma_i (\lambda_i + \mu) = \lambda_i \beta_i \implies \bs = \sum_{i=1}^p \frac{\lambda_i \beta_i}{\lambda_i + \mu} \bv_i
    \end{equation}

    \item \textbf{Global Minimizer ($\bs_G$):} By Martínez Theorem 3.1, the global multiplier satisfies $\mu_G \ge -\lambda_1$. Assuming the generic case $\lambda_i + \mu_G > 0$:
    \begin{equation}
        \text{sgn}(\bv_1^\top \bs_G) = \text{sgn}(\beta_1)
    \end{equation}
    Thus, $\bs_G$ lies in the same spectral hemisphere as $\tilde{\bh}$.

    Note that if $\lambda_1 = 0$, then following Martínez Lemma 3.2, there would be no local-nonglobal minimizer. Hence, we consider $\lambda_1 > 0$.

    \item \textbf{Local-nonglobal Minimizer ($\bs_L$):} By Martínez Lemma 3.3, a local nonglobal minimizer corresponds to $\mu_L \in (-\lambda_2, -\lambda_1)$. This implies $(\lambda_1 + \mu_L) < 0$, resulting in:
    \begin{equation}
        \text{sgn}(\bv_1^\top \bs_L) = -\text{sgn}(\beta_1)
    \end{equation}
\end{enumerate}

\subsubsection{Topological Separation Analysis}

We define the the trap ($S_{opp}$), the safe basin ($S_{same}$), and the separating hyperplane ($H$) between them:
\begin{align}
    S_{opp} &= \{ \bu \in \mathcal{M} \mid \text{sgn}(\bv_1^\top \bu) = -\text{sgn}(\beta_1) \} \\
    S_{same} &= \{ \bu \in \mathcal{M} \mid \text{sgn}(\bv_1^\top \bu) = \text{sgn}(\beta_1) \} \\
    H &= \{ \bu \in \mathcal{M} \mid \bv_1^\top \bu = 0 \}
\end{align}
Note that any continuous path from the initialization $\bu_0 \in S_{same}$ to the trap $\bs_L \in S_{opp}$ must intersect $H$.

\paragraph{Boundary Flow Analysis:}
We examine the gradient flow $\dot{\bu}$ on the boundary $H$. Let $\bw = -\text{sgn}(\beta_1) \bv_1$ be the normal vector pointing into $S_{opp}$. The projected gradient dynamics are:
\begin{equation}
    \dot{\bu} = -\Pi_{\bu} \nabla J(\bu) = -\left( \bI - \frac{\bu\bu^\top}{r^2} \right) \nabla J(\bu)
\end{equation}
We project the flow onto the principal direction $\bv_1$:
\begin{align}
    \bv_1^\top \dot{\bu} &= -\bv_1^\top \left( \bI - \frac{\bu\bu^\top}{r^2} \right) \nabla J(\bu) = -\left( \bv_1^\top - \frac{(\bv_1^\top \bu)\bu^\top}{r^2} \right) \nabla J(\bu)
\end{align}
For any $\bu \in H$, we have $\bv_1^\top \bu = 0$. The expression simplifies to:
\begin{equation}
    \bv_1^\top \dot{\bu} = -\bv_1^\top \nabla J(\bu)
\end{equation}
By checking the flow relative to the ``bad'' direction $\bw$, we have:
\begin{align}
    \bw^\top \dot{\bu} &= -\text{sgn}(\beta_1) (\bv_1^\top \dot{\bu}) = \text{sgn}(\beta_1) \bv_1^\top \nabla J(\bu) = \text{sgn}(\beta_1) \bv_1^\top (2\bSigma\bu - \boldsymbol{\ell}) \\
    &= \text{sgn}(\beta_1) \bv_1^\top (2\bSigma)\bu - \text{sgn}(\beta_1) \bv_1^\top \boldsymbol{\ell}
\end{align}
Analyzing the two terms:
\begin{enumerate}
    \item $\bv_1^\top (2\bSigma) = \lambda_1 \bv_1^\top \implies \bv_1^\top (2\bSigma) \bu = \lambda_1 (\bv_1^\top \bu) = 0$ (since $\bu \in H$).
    \item $\bv_1^\top \boldsymbol{\ell} = -\bv_1^\top (2\bSigma) \tilde{\bh} = -\lambda_1 \beta_1$.
\end{enumerate}
Substituting back:
\begin{equation}
    \bw^\top \dot{\bu} = 0 - \text{sgn}(\beta_1) (-\lambda_1 \beta_1) = -\lambda_1 |\beta_1|
\end{equation}
Since $\lambda_1 > 0$ and $|\beta_1| > 0$, we have:
\begin{equation}
    \bw^\top \dot{\bu} < 0 \quad \forall \bu \in H
\end{equation}
\textbf{Conclusion:} To get to the trap $S_{opp}$, any gradient flow must intersect $H$, but the gradient at the intersection point strictly points away from $S_{opp}$ (the basin of the local minimizer) and towards $S_{same}$. Therefore, the trajectory starting at $\bu_0$ can not reach $S_{opp}$ and will be attract to $S_{same}$. Hence it must converge to the global minimizer $\bs_G$.

\subsection{Proof of Proposition~\ref{prop:eta} (Choice of Step-size)}
\label{subsec:proof-eta}
\textbf{$J$ is a $L$-smooth with $L := 2 \|\Pi_{\mathbf{d}^{\perp}} \mathbf{\Sigma} \Pi_{\mathbf{d}^{\perp}}\|_2$.}

Since $T_{\mathbf{x}}M = \{ \boldsymbol{\xi} \in \mathbb{R}^p : \mathbf{x}^\top \boldsymbol{\xi} = 0, \ \mathbf{d}^\top \boldsymbol{\xi} = 0 \}$, hence $\mathbf{d}^\top \boldsymbol{\xi} = 0$, so $\boldsymbol{\xi} \in \mathbf{d}^{\perp} \implies \Pi_{\bd^\perp} \boldsymbol{\xi}  = \boldsymbol{\xi}.$ Therefore,
\begin{equation} 
\boldsymbol{\xi}^\top \mathbf{\Sigma} \boldsymbol{\xi} = (\Pi_{\mathbf{d}^{\perp}}\boldsymbol{\xi})^\top \mathbf{\Sigma}(\Pi_{\mathbf{d}^{\perp}}\boldsymbol{\xi}) = \boldsymbol{\xi}^\top (\Pi_{\mathbf{d}^{\perp}} \mathbf{\Sigma} \Pi_{\mathbf{d}^{\perp}}) \boldsymbol{\xi}.
\end{equation}
Since $\Pi_{\mathbf{d}^{\perp}} \mathbf{\Sigma} \Pi_{\mathbf{d}^{\perp}}$ is symmetric PSD on $\mathbf{d}^{\perp}$, its Rayleigh quotient is bounded by its spectral norm:
\begin{equation} 
\boldsymbol{\xi}^\top (\Pi_{\mathbf{d}^{\perp}} \mathbf{\Sigma} \Pi_{\mathbf{d}^{\perp}}) \boldsymbol{\xi} \leq \|\Pi_{\mathbf{d}^{\perp}} \mathbf{\Sigma} \Pi_{\mathbf{d}^{\perp}}\|_2 \|\boldsymbol{\xi}\|^2.
\end{equation}

Because $J$ is quadratic, $\nabla^2 J(\mathbf{x}) = 2\mathbf{\Sigma}$, for all $\mathbf{x}.$
Therefore,
\begin{equation}
\label{eq:rayleigh}
\boldsymbol{\xi}^\top \nabla^2 J(\mathbf{x}) \boldsymbol{\xi} = 2 \boldsymbol{\xi}^\top \mathbf{\Sigma} \boldsymbol{\xi} \leq 2 \|\Pi_{\mathbf{d}^{\perp}} \mathbf{\Sigma} \Pi_{\mathbf{d}^{\perp}}\|_2 \|\boldsymbol{\xi}\|^2.
\end{equation}
This proves the claim with $L := 2 \|\Pi_{\mathbf{d}^{\perp}} \mathbf{\Sigma} \Pi_{\mathbf{d}^{\perp}}\|_2$. 

\textbf{Derive the bound for step-size $\eta$.}

Consider the geodesic $\gamma(\tau) := \mathrm{exp}_{\mathbf{x}_t}(\tau \boldsymbol{\xi}_t)$. Since the velocity $\dot{\gamma}$ is tangent to $M$, Eqn.~\ref{eq:rayleigh} applies, bounding the second derivative of the objective $J$ along the path by $\frac{d^2}{d\tau^2}J(\gamma(\tau)) \leq L \|\dot{\gamma}(\tau)\|^2$. This implies the quadratic upper bound:
\[
J(\gamma(\eta)) \leq J(\mathbf{x}_t) + \eta \langle \mathrm{grad } J(\mathbf{x}_t), \boldsymbol{\xi}_t \rangle + \frac{L\eta^2}{2} \|\boldsymbol{\xi}_t\|^2.
\]
Substituting $\boldsymbol{\xi}_t = -\mathrm{grad } J(\mathbf{x}_t)$ and noting $\|\boldsymbol{\xi}_t\| = \|\mathrm{grad } J(\mathbf{x}_t)\|$, we obtain $J(\mathbf{x}_{t+1}) \leq J(\mathbf{x}_t) - \eta(1 - \frac{L\eta}{2}) \|\text{grad } J(\mathbf{x}_t)\|^2$. Since $\eta \leq 1/L$ implies $1 - \frac{L\eta}{2} \geq \frac{1}{2}$, the descent condition holds. 

This proves the Prop.~\ref{prop:eta}

\subsection{Proof of Sublinear Stationarity Rate}
\label{app:sublinear_rate}
\begin{corollary}[Sublinear stationarity rate]
\label{cor:convergence}
Under the assumptions of Prop.~\ref{prop:eta}, for any horizon $T\ge 1$,
\[
\min_{0 \leq t < T} \|\mathrm{grad}\, J(\mathbf{x}_t)\| \le \sqrt{\frac{2(J(\mathbf{x}_0) - J^*)}{\eta T}} = O\!\left(\frac{1}{\sqrt{T}}\right),
\]
where $J^* := \min_{\mathbf{x}\in\mathcal{M}} J(\mathbf{x})$.
\end{corollary}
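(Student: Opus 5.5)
The plan is the standard telescoping argument built on the per-step descent inequality of Proposition~\ref{prop:eta}. Under its assumptions ($0<\eta\le 1/L$ with $L = 2\|\Pi_{\bd^\perp}\bSigma\Pi_{\bd^\perp}\|_2$), every iterate of Algorithm~\ref{alg:geodesic_descent} satisfies
\[
J(\bx_{t+1}) \le J(\bx_t) - \tfrac{\eta}{2}\,\|\mathrm{grad}\,J(\bx_t)\|^2 .
\]
I will take this inequality as given and rearrange it to
\[
\|\mathrm{grad}\,J(\bx_t)\|^2 \le \tfrac{2}{\eta}\bigl(J(\bx_t)-J(\bx_{t+1})\bigr).
\]

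Summing over $t=0,\dots,T-1$ telescopes the right-hand side to $\tfrac{2}{\eta}\bigl(J(\bx_0)-J(\bx_T)\bigr)$. The one point that needs care is justifying $J(\bx_T)\ge J^*$. By Lemma~\ref{lem:exp-map}, each exponential-map step keeps the iterate exactly on $\mathcal{M}$. Since $\bx_0\in\mathcal{M}$ by the initialization, induction gives $\bx_T\in\mathcal{M}$. Also, $J$ is continuous and $\mathcal{M}$ is compact (a $(p-2)$-sphere, as noted in the proof of Lemma~\ref{lem:convergence}), so the minimum $J^*$ is attained and finite. Moreover $J^*\ge 0$ because $\bSigma$ is PSD. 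Hence $J(\bx_0)-J(\bx_T)\le J(\bx_0)-J^*$.

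To finish, bound the minimum by the average:
\[
T\min_{0\le t<T}\|\mathrm{grad}\,J(\bx_t)\|^2 \le \sum_{t=0}^{T-1}\|\mathrm{grad}\,J(\bx_t)\|^2 \le \tfrac{2}{\eta}\bigl(J(\bx_0)-J^*\bigr).
\]
Dividing by $T$ and taking square roots gives the stated bound. Since $\eta$ and $J(\bx_0)-J^*$ do not depend on $T$, the bound is $O(1/\sqrt{T})$.

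None of these steps is a real obstacle, since the whole argument inherits from Proposition~\ref{prop:eta}. The only subtlety is to confirm that the algorithm's update $\bx_{t+1}=\exp_{\bx_t}(\eta\,\bxi_t)$, written with a normalized direction and angle $\tau=\eta\|\bxi_t\|/r$, is exactly the step analyzed in Proposition~\ref{prop:eta}. Lemma~\ref{lem:exp-map} with $\bv=\eta\bxi_t$ confirms this. It also handles the degenerate case $\bxi_t=0$, where the iterate is stationary and the bound holds trivially.
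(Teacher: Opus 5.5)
Your proposal is correct and follows the paper's proof essentially verbatim. Both telescope the descent inequality of Proposition~\ref{prop:eta} over $t=0,\dots,T-1$, lower-bound $J(\mathbf{x}_T)$ by $J^*$, bound the minimum of $\|\mathrm{grad}\,J(\mathbf{x}_t)\|^2$ by its average, and take square roots; your extra checks (iterates staying on $\mathcal{M}$ via Lemma~\ref{lem:exp-map}, attainment of $J^*$ by compactness, and the degenerate case $\bxi_t=0$) are details the paper leaves implicit and do not change the argument.
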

The proof of Cor.~\ref{cor:convergence} is provided in App.~\ref{subsec:proof-convergence}.

Although the $O(1/\sqrt{T})$ rate is the best guarantee for potentially degenerate landscapes, we can achieve significantly faster local convergence by regularizing the collateral metric.
The proof for Rem.~\ref{remark:linear_convergence} is provided in App.~\ref{subsec:linear-convergence}.
\label{subsec:proof-convergence}
From Proposition~\ref{prop:eta}, the iterates satisfy the descent inequality $J(\mathbf{x}_{t+1}) \leq J(\mathbf{x}_t) - \frac{\eta}{2} \|\mathrm{grad} J(\mathbf{x}_t)\|^2$. Summing this over $t = 0, \dots, T-1$ yields:
\[
J(\mathbf{x}_T) \leq J(\mathbf{x}_0) - \frac{\eta}{2} \sum_{t=0}^{T-1} \|\mathrm{grad} J(\mathbf{x}_t)\|^2.
\]
Noting that $J(\mathbf{x}_T) \geq J^*$, we have $\sum_{t=0}^{T-1} \|\mathrm{grad} J(\mathbf{x}_t)\|^2 \leq \frac{2}{\eta}(J(\mathbf{x}_0) - J^*)$. 
Since minimum of a set is bounded by its average ($\min_t a_t \leq \frac{1}{T}\sum a_t$), we obtain:
\[
\min_{0 \leq t < T} \|\mathrm{grad} J(\mathbf{x}_t)\|^2 \leq \frac{1}{T} \sum_{t=0}^{T-1} \|\mathrm{grad} J(\mathbf{x}_t)\|^2 \leq \frac{2(J(\mathbf{x}_0) - J^*)}{\eta T}.
\]
Taking the square root completes the proof.
\subsection{Proof of Linear Convergence via Regularization}
\begin{remark}[Linear convergence via regularization]
\label{remark:linear_convergence}
The $O(1/\sqrt{T})$ rate is standard when the objective curvature may be flat along some feasible directions, which can occur when $\mathbf{\Sigma}$ is low-rank.
To obtain a local linear convergence rate, one can regularize the objective by adding a small isotropic penalty on $\mathbf{d}^\perp$:
\begin{equation}
\nonumber
    \mathbf{\Sigma}_{\varepsilon} := \mathbf{\Sigma} + \varepsilon \Pi_{\mathbf{d}^{\perp}},
    \qquad \varepsilon>0, \qquad \Pi_{\mathbf{d}^{\perp}} = \mathbf{I} - \mathbf{d}\mathbf{d}^\top.
\end{equation}
\end{remark}

\label{subsec:linear-convergence}

\textbf{Detailed Statemment }

Define the regularized objective
\[
J_{\varepsilon}(\mathbf{x}) := (\mathbf{x} - \mathbf{h})^{\top} \mathbf{\Sigma}_{\varepsilon} (\mathbf{x} - \mathbf{h}), \quad \mathbf{\Sigma}_{\varepsilon} := \mathbf{\Sigma} + \varepsilon \Pi_{\bd^\perp}, \quad \varepsilon > 0,
\]
on the same manifold $M$. Let $\mathbf{x}_{\varepsilon}^*$ be a (global) minimizer of $J_{\varepsilon}$ on $M$.
Assume the Riemannian Hessian of $J_{\varepsilon}$ at $\mathbf{x}_{\varepsilon}^*$ is {positive definite on the tangent space}, i.e., there exists $\mu_{\varepsilon} > 0$ such that
\begin{equation}
\label{eq:hessian-pd}
\langle \boldsymbol{\xi}, \text{Hess}\, J_{\varepsilon}(\mathbf{x}_{\varepsilon}^*)[\boldsymbol{\xi}] \rangle \ge \mu_{\varepsilon} \|\boldsymbol{\xi}\|^2 \quad \forall \boldsymbol{\xi} \in T_{\mathbf{x}_{\varepsilon}^*} M.
\end{equation}
Then there exists a neighborhood $\mathcal{N}$ of $\mathbf{x}_{\varepsilon}^*$ such that if Algorithm 1 is run on $J_{\varepsilon}$ with a sufficiently small step size $\eta \in (0, \bar{\eta}]$ and initialization $\mathbf{x}_0 \in \mathcal{N}$, the iterates satisfy {linear convergence}:
\begin{equation}
\label{eq:linear-conv}
J_{\varepsilon}(\mathbf{x}_t) - J_{\varepsilon}(\mathbf{x}_{\varepsilon}^*) \le (1 - \eta \mu_{\varepsilon})^t \left( J_{\varepsilon}(\mathbf{x}_0) - J_{\varepsilon}(\mathbf{x}_{\varepsilon}^*) \right).
\end{equation}

\textbf{Proof}

Condition~\eqref{eq:hessian-pd} is the Riemannian analogue of \textbf{strong convexity} (restricted to the tangent space). It implies a local Polyak--{\L}ojasiewicz-type inequality in a neighborhood of $\mathbf{x}_{\varepsilon}^*$:
\begin{equation}
\label{eq:pl-inequality}
\|\text{grad}\, J_{\varepsilon}(\mathbf{x})\|^2 \ge 2\mu_{\varepsilon} \left( J_{\varepsilon}(\mathbf{x}) - J_{\varepsilon}(\mathbf{x}_{\varepsilon}^*) \right), \quad \forall \mathbf{x} \in \mathcal{N}.
\end{equation}
On the other hand, $J_{\varepsilon}$ is smooth on $M$, so the same one-step descent bound as before holds locally:
\begin{equation}
\label{eq:descent-bound}
J_{\varepsilon}(\mathbf{x}_{t+1}) \le J_{\varepsilon}(\mathbf{x}_t) - \eta \left( 1 - \frac{L_{\varepsilon}\eta}{2} \right) \|\text{grad}\, J_{\varepsilon}(\mathbf{x}_t)\|^2 \quad \forall \mathbf{x}_t \in \mathcal{N},
\end{equation}
for some local smoothness constant $L_{\varepsilon}$. Choose $\eta \le 1/L_{\varepsilon}$, so the bracket is at least $1/2$, giving
\begin{equation}
\label{eq:descent-simplified}
J_{\varepsilon}(\mathbf{x}_{t+1}) - J_{\varepsilon}(\mathbf{x}_{\varepsilon}^*) \le \left( J_{\varepsilon}(\mathbf{x}_t) - J_{\varepsilon}(\mathbf{x}_{\varepsilon}^*) \right) - \frac{\eta}{2} \|\text{grad}\, J_{\varepsilon}(\mathbf{x}_t)\|^2.
\end{equation}
Finally apply~\eqref{eq:pl-inequality} to~\eqref{eq:descent-simplified}:
\[
J_{\varepsilon}(\mathbf{x}_{t+1}) - J_{\varepsilon}(\mathbf{x}_{\varepsilon}^*) \le (1 - \eta \mu_{\varepsilon}) \left( J_{\varepsilon}(\mathbf{x}_t) - J_{\varepsilon}(\mathbf{x}_{\varepsilon}^*) \right),
\]
and iterate the inequality to obtain~\eqref{eq:linear-conv}.

\section{Derivation of special cases.}
\label{sec:special-cases}
\subsection{Derive Worst-Colateral Damage with $\bd$-Orthogonal Features}
\label{subsec:orthogonal-features}
\textbf{Orthogonal-features assumption.} Other features $\mathbf{f}$ live in the subspace $\mathbf{d}^\perp$. Since we do not know the true feature set, we take a conservative worst-case bound over \textit{all} unit directions in $\mathbf{d}^\perp$:
\begin{equation}
    (\text{collateral bound}) \quad \sup_{\|\mathbf{f}\|=1, \mathbf{f}^\top \mathbf{d}=0} (\mathbf{f}^\top(\mathbf{x}-\mathbf{h}))^2.
\end{equation}

Note that the maximizer $\mathbf{f}$ is a real feature—but serves as a upper bound over other features directions. In this case, since we assume that we might not know exactly where the other features live. To be safe, we will minimize the magnitude of our shift in all directions where a feature could potentially be.

\textbf{Equivalance in objective}
Let $\mathbf{\Delta} := \mathbf{x} - \mathbf{h}$. Split $\mathbf{\Delta}$ into components parallel and orthogonal to $\mathbf{d}$:
    $\mathbf{\Delta} = (\mathbf{d}^\top \mathbf{\Delta})\mathbf{d} + \Pi_{\bd^\perp} \mathbf{\Delta}, \quad \Pi_{\bd^\perp} := \mathbf{I} - \mathbf{d}\mathbf{d}^\top.$

Now take any direction $\mathbf{f}$ with $\mathbf{f}^\top \mathbf{d} = 0$ (so $\mathbf{f} \in \mathbf{d}^\perp$). Then
\begin{equation}
    \mathbf{f}^\top \mathbf{\Delta} = \mathbf{f}^\top ((\mathbf{d}^\top \mathbf{\Delta})\mathbf{d}) + \mathbf{f}^\top (\Pi_{\bd^\perp} \mathbf{\Delta}) = \underbrace{(\mathbf{d}^\top \mathbf{\Delta})(\mathbf{f}^\top \mathbf{d})}_{0} + \mathbf{f}^\top (\Pi_{\bd^\perp} \mathbf{\Delta}) = \mathbf{f}^\top (\Pi_{\bd^\perp} \mathbf{\Delta}).
\end{equation}

So the worst-case bound becomes
\begin{equation}
    \sup_{\|\mathbf{f}\|=1, \mathbf{f}^\top \mathbf{d}=0} (\mathbf{f}^\top \mathbf{\Delta})^2 = \sup_{\|\mathbf{f}\|=1, \mathbf{f} \in \mathbf{d}^\perp} (\mathbf{f}^\top (\Pi_{\bd^\perp} \mathbf{\Delta}))^2.
\end{equation}

Apply Cauchy–Schwarz in the subspace $\mathbf{d}^\perp$:
\begin{equation}
    |\mathbf{f}^\top (\Pi_{\bd^\perp} \mathbf{\Delta})| \le \|\mathbf{f}\| \|\Pi_{\bd^\perp} \mathbf{\Delta}\| = \|\Pi_{\bd^\perp} \mathbf{\Delta}\|.
\end{equation}

Therefore,
\begin{equation}
    \sup_{\|\mathbf{f}\|=1, \mathbf{f}^\top \mathbf{d}=0} (\mathbf{f}^\top (\mathbf{x}-\mathbf{h}))^2 = \|\Pi_{\bd^\perp} (\mathbf{x}-\mathbf{h})\|^2.
\end{equation}

So the simplified objective becomes
\begin{equation}
     \min_{\mathbf{x} \in \mathcal{M}, ||\bx|| = 1, \bx^\top\bd = \alpha} \|\Pi_{\bd^\perp} (\mathbf{x}-\mathbf{h})\|^2.
\end{equation}
\textbf{Solution}
Use the parameterization $\mathbf{x} = \alpha \mathbf{d} + r\mathbf{u}$ with $\mathbf{u} \perp \mathbf{d}, \|\mathbf{u}\|=1$. Let $\mathbf{h}_\perp := \Pi_{\bd^\perp} \mathbf{h} = \mathbf{h} - (\mathbf{d}^\top \mathbf{h})\mathbf{d}$. 

The objective simplifies as follows:
\begin{equation}
    \|\Pi_{\bd^\perp} (\mathbf{x}-\mathbf{h})\|^2 = \|r\mathbf{u} - \mathbf{h}_\perp\|^2
\end{equation}
\begin{equation}
    = r^2 \|\mathbf{u}\|^2 + \|\mathbf{h}_\perp\|^2 - 2r \mathbf{u}^\top \mathbf{h}_\perp = r^2 + \|\mathbf{h}_\perp\|^2 - 2r \mathbf{u}^\top \mathbf{h}_\perp
\end{equation}

Minimizing this is equivalent to maximizing $\mathbf{u}^\top \mathbf{h}_\perp$. By Cauchy–Schwarz, the maximum is reached when $\mathbf{u}$ aligns with $\mathbf{h}_\perp$:
\begin{equation}
    \mathbf{u}^* = \frac{\mathbf{h}_\perp}{\|\mathbf{h}_\perp\|} \quad (\text{if } \mathbf{h}_\perp \neq \mathbf{0})
\end{equation}

The optimal solution is thus:
\begin{equation}
    \mathbf{x}^* = \alpha \mathbf{d} + r \frac{\mathbf{h} - (\mathbf{d}^\top \mathbf{h})\mathbf{d}}{\|\mathbf{h} - (\mathbf{d}^\top \mathbf{h})\mathbf{d}\|}
\end{equation}

\subsection{Derive Worst-Collateral Damage with $\bd$-Near-Orthogonal Features}
\label{subsec:near-orthogonal}

\textbf{Near-orthogonal other features.} Instead of requiring $\bbf^\top \bd = 0$, a small overlap is allowed:
\begin{equation}
    \|\bbf\| = 1, \quad |\bbf^\top \bd| \le \varepsilon \text{,}
\end{equation}
where $\varepsilon \in [0, 1)$ is small.

\textbf{Worst-case collateral objective (same style as the remark):}
\begin{equation}
    \min_{\bx \in \mathcal{M}} \sup_{\|\bbf\|=1, |\bbf^\top \bd| \le \varepsilon} (\bbf^\top (\bx - \bh))^2 \text{.}
\end{equation}
As before, this is a worst-case bound over directions; it is not assumed that the maximizer $\bbf$ is an actual feature.

\textbf{Equivalence of objective function}
Similar to the $\bd$-orthogonal case, $\mathbf{\Delta} := \bx - \bh$. Decompose $\mathbf{\Delta}$ into parts parallel and orthogonal to $\bd$:
\begin{itemize}
    \item parallel scalar: $\beta := \bd^\top \mathbf{\Delta} = \bd^\top \bx - \bd^\top \bh = \alpha - \bd^\top \bh$.
    \item orthogonal part: $\mathbf{\Delta}_\perp := \Pi_{\bd^\perp} \mathbf{\Delta} = \Pi_{\bd^\perp} (\bx - \bh)$.
\end{itemize}
Important point: because $\bx \in \mathcal{M}$ forces $\bd^\top \bx = \alpha$, the scalar $\beta$ is \textbf{constant over all feasible $\bx$}.

Taking any feasible $\bbf$ with $\|\bbf\| = 1$ and $|\bbf^\top \bd| \le \varepsilon$, we denote: $a(\bbf) := \bbf^\top \bd \quad (|a(\bbf)| \le \varepsilon)$. Then $\bbf$ can be decomposed as:
\begin{equation}
    \bbf = a(\bbf) \bd + \sqrt{1 - a(\bbf)^2} \bu_f, \quad \text{where } \bu_f \perp \bd, \|\bu_f\| = 1 \text{.}
\end{equation}
Therefore:
\begin{equation}
    \bbf^\top \mathbf{\Delta} = a(\bbf)(\bd^\top \mathbf{\Delta}) + \sqrt{1 - a(\bbf)^2} \bu_f^\top \mathbf{\Delta}_\perp = a(\bbf)\beta + \sqrt{1 - a(\bbf)^2} \bu_f^\top \mathbf{\Delta}_\perp \text{.}
\end{equation}
For any fixed $a$, the worst case over $\bu_f$ is achieved by aligning $\bu_f$ with $\mathbf{\Delta}_\perp$, giving:
\begin{equation}
    \sup_{\bu_f \perp \bd, \|\bu_f\|=1} |\bbf^\top \mathbf{\Delta}| = |a| |\beta| + \sqrt{1 - a^2} \|\mathbf{\Delta}_\perp\| \text{.}
\end{equation}
Therefore the whole supremum becomes a maximization over $a$:
\begin{equation}
    \sup_{\|\bbf\|=1, |\bbf^\top \bd| \le \varepsilon} |\bbf^\top \mathbf{\Delta}| = \max_{0 \le a \le \varepsilon} \left( a|\beta| + \sqrt{1 - a^2} \|\mathbf{\Delta}_\perp\| \right) \text{.}
\end{equation}
Since $\sup (\bbf^\top \mathbf{\Delta})^2 = (\sup |\bbf^\top \mathbf{\Delta}|)^2$, the $\bd$-near-orthogonal objective becomes:
\begin{equation}
    \min_{\bx \in \mathcal{M}} \left[ \max_{0 \le a \le \varepsilon} \left( a|\beta| + \sqrt{1 - a^2} \|\Pi_{\bd^\perp} (\bx - \bh)\| \right) \right]^2 \text{.}
\end{equation}
Since $|\beta| = |\alpha - \bd^\top \bh|$ is a constant, the only $\bx$-dependence component in the objective is:
\begin{equation}
    A(\bx) := \|\Pi_{\bd^\perp} (\bx - \bh)\|.
\end{equation}
\textbf{Solution}
\textit{Monotonicity fact.} For any fixed $a \in [0, \varepsilon]$, the expression $a|\beta| + \sqrt{1 - a^2} A(\bx)$ is an increasing function of $A(\bx)$. A maximum of increasing functions is also increasing, and squaring preserves the argmin since everything is nonnegative. Hence, minimizing the objective over $\bx \in \mathcal{M}$ is equivalent to minimizing $A(\bx) = \|\Pi_{\bd^\perp} (\bx - \bh)\|$ over $\bx \in \mathcal{M}$, for any fixed $a$.

That means the near-orthogonal problem has exactly the same minimizers in Eqn.~\ref{eq:slerp} as the orthogonal case $\min_{\bx \in \mathcal{M}} \|\Pi_{\bd^\perp} (\bx - \bh)\|^2$.

\subsection{Uniform features on a sphere}
\label{subsec:uniform-sphere}
Instead of the conservative worst-case over $\bbf$, we assume the collateral direction $\bbf$ is random and uniform on the unit sphere in $\mathbb{R}^p$. The expected collateral damage objective is:
\[
\min_{\bx \in \mathcal{M}} \mathbb{E}_{\bbf} \left[ (\bbf^\top (\bx - \bh))^2 \right].
\]
The expectation expands as:
$\mathbb{E}[(\bbf^\top \mathbf{\Delta})^2] = \mathbb{E}[\mathbf{\Delta}^\top (\bbf\bbf^\top) \mathbf{\Delta}] = \mathbf{\Delta}^\top \mathbb{E}[\bbf\bbf^\top] \mathbf{\Delta}.$
Since the uniform distribution on a sphere is rotationally invariant, the second moment matrix is isotropic:
\[
\mathbb{E}[\bbf\bbf^\top] = \frac{1}{p} \bI.
\]
Substituting this back, the objective becomes minimizing the squared Euclidean distance:
\[
\mathbb{E}[(\bbf^\top (\bx - \bh))^2] = \frac{1}{p} \|\bx - \bh\|^2.
\]
Again, since $\bd^\top (\bx - \bh) = \alpha - \bd^\top \bh$ is a constant and
\[
\|\bx - \bh\|^2 = \|\Pi_{\bd^\perp} (\bx - \bh)\|^2 + (\bd^\top (\bx - \bh))^2,
\]
Hence, minimizing the total distance $\|\bx - \bh\|^2$ over $\mathcal{M}$ is exactly equivalent to minimizing $\|\Pi_{\bd^\perp} (\bx - \bh)\|^2$, yielding the unique SLERP point in $\text{span}\{\bh, \bd\}$ as the solution.

\section{More experiment details}
\label{app:exp_details}
\subsection{Models.}
\label{app:Models}
We evaluate across a diverse set of instruction-tuned LLMs spanning model families and scales:
Llama-3.2-3B-Instruct~\cite{dubey2024llama}, Llama-3.1-8B-Instruct~\cite{dubey2024llama}, Qwen2.5-14B-Instruct~\cite{qvq-72b-preview}, and Gemma-2-9B-it~\cite{team2024gemma}.
All results are reported using the official tokenizers and checkpointed weights.

\subsection{Intervention locations.}
\label{app:intervention_locations}
We perform activation steering by editing hidden states at \textit{intervention locations}, positioned after the layer normalization of the attention and MLP layers, results in $2L$ locations, where $L$ is the number of model's layers. We apply this setting to all methods for a fair comparison.

\subsection{Target direction construction.}
\label{app:steering_vector_construction}
We precompute the steering direction $\mathbf{d}_{\ell}$ for each intervention location using a calibration dataset following the Angular Steering protocol. We define $\mathcal{D}_{\mathrm{harmful}}$ using an 80\% split of AdvBench~\cite{zou2023advbench} with 416 harmful instructions and $\mathcal{D}_{\mathrm{harmless}}$ using randomly sampled 512 harmless instructions from Alpaca~\cite{alpaca}.

Let $\mathbf{h}_{\ell,t}(x) \in \mathbb{R}^d$ be the activation at layer $\ell$ and token position $t$, and let $\hat{\mathbf{h}}_{\ell,t}(x) = \mathbf{h}_{\ell,t}(x) / \|\mathbf{h}_{\ell,t}(x)\|$ denote its unit-normalized version. Here, we omit sub-layer indices for simplicity. We compute the normalized mean activations for each class:
$\mu_{\mathrm{harmful}} = \frac{1}{|\mathcal{D}_{\mathrm{harmful}}|}\sum_{x\in\mathcal{D}_{\mathrm{harmful}}} \hat \bh_{\ell,t}(x),
\qquad
\mu_{\mathrm{harmless}} = \frac{1}{|\mathcal{D}_{\mathrm{harmless}}|}\sum_{x\in\mathcal{D}_{\mathrm{harmless}}} \hat \bh_{\ell,t}(x),$
The feature direction is then defined by the standard difference-in-means estimator~\cite{belrose2023diff-in-means}, $\mathbf{d}_{\ell} \propto \mu_{\mathrm{harmful}} - \mu_{\mathrm{harmless}}$, renormalized to unit length. This yields a different steering direction for each of the $2L$ locations. We use these intervention location-specific directions, whereas for Angular Steering, we keep their selection rule for a single direction to apply across all locations.

\subsection{Collateral matrix estimation and optimization hyperparameters.}
\label{app:sigma_compute}
We precompute the collateral matrix using the activation second moment, denoted $\mathbf{\Sigma}_{\mathbf{h_{\text{ref}}},\ell}$ for layer $\ell$ (omitting sub-layer indices). For each intervention location, we collect $N=100{,}000$ token activations from the C4 corpus~\cite{raffel2020exploring}. Applying the same unit-normalization as in the target direction construction, we compute the empirical second moment at layer $\ell$ as $\mathbf{\Sigma}_{\mathbf{h_{\text{ref}}},\ell} = \frac{1}{N}\sum_{i=1}^{N}\hat{\mathbf{h_{\text{ref}}}}_{\ell, i}\hat{\mathbf{h_{\text{ref}}}}_{\ell, i}^{\top}.$ 
While we use C4 here, other corpus can be chosen to reflect specific task priorities or user preferences, as it defines which feature directions are costly to perturb. 

To ensure scale insensitivity across layers and models, we normalize the matrix by its top eigenvalue, $\mathbf{\Sigma}_{\mathbf{h},\ell} \leftarrow \mathbf{\Sigma}_{\mathbf{h_{\text{ref}}},\ell} / \lambda_{\max}(\mathbf{\Sigma}_{\mathbf{h_{\text{ref}}},\ell})$. Since this ensures $\|\mathbf{\Sigma}_{\mathbf{h_{\text{ref}}},\ell}\|_2=1$, Prop.~\ref{prop:eta} implies that any step size $\eta \in (0, 0.5]$ guarantees objective decrease. In all experiments, we set $\eta=0.3$ and use $T=1$ iteration for efficiency, finding that a single geodesic step is sufficient for strong performance.

\subsection{Baselines.}
\label{app:baselines}
We compare our method to the following steering operators:
(i) {SLERP}, which interpolates on the unit sphere toward the target direction;
(ii) {Angular Steering}, which rotates activations within a fixed 2D steering plane to a target in-plane angle;
and (iii) {ActAdd}, with a scalar coefficient $\lambda$ controlling strength, and (iv) {No steering}.

\subsection{Steering strength.}
\label{app:steering_strength}
Different baselines expose different knobs: SLERP/COAST naturally accept an alignment target expressed as a cosine
similarity with $\bd_{\ell}$, ActAdd uses an additive coefficient, and Angular Steering controls a rotation angle in a
chosen 2D plane. To enable comparison with angular steering by degree of steered, we match methods using a common \emph{angular strength} parameter. Specifically, we sweep an angle budget $\theta\in[0^\circ,180^\circ]$ and define the corresponding cosine target $\alpha = \cos(\theta)$.
Restricting $\theta$ to $[0^\circ,180^\circ]$ avoids cosine aliasing on $[0^\circ,360^\circ]$ (distinct angles can share the
same cosine), while spanning the full range of cosine targets from $+1$ to $-1$.  For Angular Steering, the method control steering directly by a target in-plane angle after projecting onto the steering plane. For actadd, we steer with coefficient from -10 to 10, each interval of 1.

\subsection{Adaptive Alignment Budget.}
\label{app:aab}
A single global target $\alpha$ applied uniformly to all tokens and layers can be overly strict: tokens whose pre-steering activation is nearly orthogonal to $\bd_{\ell}$ would be forced to undergo a large rotation, which can introduce unnecessary collateral change. To make steering more context-sensitive while keeping a single user-facing control parameter, we set the
constraint for SLERP/COAST at each $(\ell,t)$ as
$\alpha_{\ell,t} \;=\; \alpha \cdot \bigl|\langle \bh_{\ell,t}, \bd_{\ell} \rangle\bigr|$ where $\bh_{\ell,t}=\bh_{\ell,t}/\|\bh_{\ell,t}\|$.
Intuitively, this allocates more steering budget to states that already express the target feature (large $|\langle \bh_{\ell,t}, \bd_{\ell} \rangle|$) and imposes a tighter constraint on unrelated states, reducing unnecessary
global distortion.

\subsection{Norm preservation for spherical steering (SLERP/COAST).}
\label{app:norm_preserve}
SLERP and COAST operate on the unit sphere to preserve directionality while controlling angular displacement.
To preserve the original activation scale, we implement both methods by (1) normalizing the activation,
(2) performing the spherical steering update on $\hat h_{\ell,t}$ to obtain $\hat x_{\ell,t}$, and then
(3) rescaling back to the original norm: $x_{\ell,t} \;=\; \|h_{\ell,t}\| \cdot \hat x_{\ell,t}$.

\section{Computational Cost Analysis.}
\label{subsec:computational-cost}
To validate the practical feasibility of our approach, we evaluate the inference throughput (tokens per second) across all methods using the official implementations. As reported in Table~\ref{tab:throughput}, the overhead introduced by steering is minimal. While COAST involves slight additional computational cost due to the per-token normalization and rescaling operations—resulting in a throughput reduction of approximately 0.9\% to 4.0\% compared to the unsteered baseline—this overhead is small in practice. The method maintains high-speed generation, confirming that COAST is well-suited for real-time deployment where both safety and low latency are critical.
\begin{table*}[t]
\centering
\caption{Inference throughput (tokens/sec) and relative generation speed change compared to the unsteered baseline. Negative percentages indicate a reduction in speed.}
\label{tab:throughput}
\resizebox{\textwidth}{!}{%
\begin{tabular}{l c cc cc cc cc}
\toprule
\textbf{Model} & \textbf{No Steering} & \multicolumn{2}{c}{\textbf{Angular}} & \multicolumn{2}{c}{\textbf{ActAdd}} & \multicolumn{2}{c}{\textbf{SLERP}} & \multicolumn{2}{c}{\textbf{COAST}} \\
 & (tok/s) & (tok/s) & ($\Delta$\%) & (tok/s) & ($\Delta$\%) & (tok/s) & ($\Delta$\%) & (tok/s) & ($\Delta$\%) \\
\midrule
Gemma-2-9B-It & 2364.4 & 2360.1 & \small{-0.18\%} & 2360.5 & \small{-0.16\%} & 2352.7 & \small{-0.49\%} & 2342.2 & \small{-0.94\%} \\
Llama-3.2-3B-Instruct & 5897.9 & 5744.7 & \small{-2.60\%} & 5820.1 & \small{-1.32\%} & 5745.4 & \small{-2.59\%} & 5663.6 & \small{-3.97\%} \\
Llama-3.1-8B-Instruct & 3771.0 & 3730.4 & \small{-1.08\%} & 3754.4 & \small{-0.44\%} & 3703.6 & \small{-1.79\%} & 3673.8 & \small{-2.58\%} \\
Qwen2.5-14B-Instruct & 2521.1 & 2511.4 & \small{-0.38\%} & 2522.6 & \small{+0.06\%} & 2505.4 & \small{-0.62\%} & 2467.6 & \small{-2.12\%} \\
\bottomrule
\end{tabular}%
}
\end{table*}

\section{Solution for Collateral-damage Minimizing Activation Steering via KKT Conditions}
\label{sec:kkt-solution}

\textbf{Problem Formulation Recall.} Given an activation vector $\bh \in \mathbb{R}^p$, a steering direction $\bd \in \mathbb{R}^p$ with $\|\bd\|=1$, and a target alignment $\alpha \in (\bd^\top \bh, 1]$, we seek a steered activation $\bx$ that minimizes collateral damage while achieving the desired alignment:
\begin{equation}
\begin{aligned} \label{app:optimization_problem}
\min_{\bx} \quad & J(\bx) = (\bx-\bh)^\top \bSigma (\bx-\bh) \\
\textrm{s.t.} \quad & \|\bx\|^2 = 1 \\
& \bd^\top \bx = \alpha
\end{aligned}
\end{equation}
where $\bSigma = \mathbb{E}[\bh\bh^\top]$ is the empirical second-moment matrix of activations, computed from a reference corpus. The objective $J(\bx)$ measures the expected squared change along feature directions, weighted by the activation covariance. The constraints ensure unit norm (preserving activation magnitude) and target alignment with the steering direction. 

\textbf{Closed-Form Solution via KKT Conditions.} In the main paper, we solve~\eqref{app:optimization_problem} using Riemannian gradient descent (Alg.~\ref{alg:geodesic_descent}). Here we derive an equivalent closed-form solution via the KKT conditions. The Lagrangian with multipliers $\lambda$ and $\mu$ for the norm and alignment constraints is:
\begin{equation}
\mathcal{L}(\bx, \lambda, \mu) = (\bx - \bh)^\top \bSigma (\bx - \bh) + \lambda(\|\bx\|^2 - 1) + \mu(\bd^\top \bx - \alpha)
\end{equation}
Setting $\nabla_{\bx} \mathcal{L} = 2\bSigma(\bx - \bh) + 2\lambda \bx + \mu \bd = 0$ and rearranging:
\begin{equation}
\label{eq:kkt-x-solution}
\bx(\lambda, \mu) = (\bSigma + \lambda \bI)^{-1}\left(\bSigma \bh - \frac{\mu}{2}\bd\right)
\end{equation}

To find $\bx$, it remains to determine the Lagrange multipliers $\lambda$ and $\mu$. We can eliminate $\mu$ by using the alignment constraint. Let $\bSigma = \bQ \boldsymbol{\Lambda} \bQ^\top$ be the eigendecomposition with $\boldsymbol{\Lambda} = \text{diag}(\sigma_1, \dots, \sigma_p)$, and define $\tilde{\bh} = \bQ^\top \bh$ and $\tilde{\bd} = \bQ^\top \bd$ as the representations in the eigenbasis. Substituting~\eqref{eq:kkt-x-solution} into $\bd^\top \bx = \alpha$ and solving for $\mu$ results in 
\begin{equation}
\label{eq:kkt-mu}
\mu(\lambda) = \frac{2(A(\lambda) - \alpha)}{B(\lambda)},
\end{equation}
where 
\begin{equation}
A(\lambda) = \sum_{i=1}^{p} \frac{\sigma_i \tilde{d}_i \tilde{h}_i}{\sigma_i + \lambda}, \quad B(\lambda) = \sum_{i=1}^{p} \frac{\tilde{d}_i^2}{\sigma_i + \lambda}.
\end{equation}

Substituting $\mu(\lambda)$ back into~\eqref{eq:kkt-x-solution} and applying the norm constraint $\|\bx\|^2 = 1$, we obtain a scalar equation in $\lambda$:
\begin{equation}
\label{eq:kkt-G}
G(\lambda) = \sum_{i=1}^{p} \left(\frac{\sigma_i \tilde{h}_i - \frac{\mu(\lambda)}{2}\tilde{d}_i}{\sigma_i + \lambda}\right)^2 - 1 = 0
\end{equation}

Solving $G(\lambda) = 0$ for $\lambda^*$ then gives $\mu^*$ via~\eqref{eq:kkt-mu} and $\bx$ via~\eqref{eq:kkt-x-solution}. Standard root-finding methods such as Brent's or Newton's method can be used. The eigendecomposition of $\bSigma$ can be precomputed once per layer, and for each $\bh$, the cost reduces to two matrix-vector products with $\bQ$ and a one-dimensional search.

\end{document}